\newtheorem{thm}{Theorem}[section]
\newtheorem{asmp}[thm]{Assumption}
\newtheorem{rem}[thm]{Remark}
\newtheorem{theorem}{Theorem}[section]
\newtheorem{lemma}[theorem]{Lemma}
\newcommand{\diag}{\mathtt{diag}}
\newcommand{\Exp}{\mathbb{E}}
\def\BibTeX{{\rm B\kern-.05em{\sc i\kern-.025em b}\kern-.08em
    T\kern-.1667em\lower.7ex\hbox{E}\kern-.125emX}}
\begin{document}
\title{Cooperative Multi-Agent Reinforcement Learning with Partial Observations}
\author{Yan Zhang, \IEEEmembership{Student Member, IEEE}, Michael M. Zavlanos, \IEEEmembership{Senior Member, IEEE}
\thanks{This work is supported in part by AFOSR under award \#FA9550-19-1-0169 and by NSF under award CNS-1932011.}
\thanks{Yan Zhang and Michael M. Zavlanos are with Mechanical Engineering and Material Science Department at Duke University, Durham, NC 27708 USA (e-mail: yan.zhang2@duke.edu, michael.zavlanos@duke.edu). }
}

\maketitle

\begin{abstract}
In this paper, we propose a distributed zeroth-order policy optimization method for Multi-Agent Reinforcement Learning (MARL). Existing MARL algorithms often assume that every agent can observe the states and actions of all the other agents in the network. This can be impractical in large-scale problems, where sharing the state and action information with multi-hop neighbors may incur significant communication overhead. The advantage of the proposed zeroth-order policy optimization method is that it allows the agents to compute the local policy gradients needed to update their local policy functions using local estimates of the global accumulated rewards that depend on partial state and action information only and can be obtained using consensus. Specifically, to calculate the local policy gradients, we develop a new distributed zeroth-order policy gradient estimator that relies on one-point residual-feedback which, compared to existing zeroth-order estimators that also rely on one-point feedback, significantly reduces the variance of the policy gradient estimates improving, in this way, the learning performance.
We show that the proposed distributed zeroth-order policy optimization method with constant stepsize converges to the neighborhood of a policy that is a stationary point of the global objective function. 
The size of this neighborhood depends on the agents' learning rates, the exploration parameters, and the number of consensus steps used to calculate the local estimates of the global accumulated rewards.
Moreover, we provide numerical experiments that demonstrate that our new zeroth-order policy gradient estimator is more sample-efficient compared to other existing one-point estimators.
\end{abstract}

\begin{IEEEkeywords}
Distributed Zeroth-Order Optimization, Multi-Agent Reinforcement Learning, Partial Observation
\end{IEEEkeywords}

\section{Introduction}
Multi-Agent Reinforcement Learning (MARL) has received a lot of attention in recent years due to its wide applicability in real-world large-scale decision making problems, e.g., cloud autonomous driving, distributed multi-robot planning, and distributed resource allocation, to name a few. The goal is to enable a team of agents to collaboratively determine the global optimal policy that maximizes the sum of their local accumulated rewards. To do so, the agents typically need to communicate with each other in order to obtain information about the global state and action of the team. This is because their states and rewards are generally affected by the actions of their other peers.
However, sharing such information can be undesirable, due to significant communication overhead or privacy concerns. Therefore, there is a great need for MARL algorithms that rely only on partial observations of the global state and action information.

A major challenge in developing cooperative MARL methods under partial observations is that the environment, as perceived by every individual agent when it interacts with the other agents, is non-stationary since it changes as a result of changes in the policies of those agents \cite{gupta2017cooperative}.
In \cite{gupta2017cooperative,lowe2017multi,foerster2017counterfactual}, this challenge is addressed using a centralized Critic function that can mitigate the effect of non-stationarity in the learning process. Then, the trained policies can be executed in a decentralized way. In \cite{omidshafiei2017deep}, a distributed offline experience replay technique is developed to enable fully decentralized training, which requires that all agents receive a global reward at each timestep. However, when the global reward is defined as the sum of local agent rewards, as in cooperative MARL, this global reward can not be easily available to the local agents in practice.
Cooperative MARL methods that maximize the sum of local rewards are considered in \cite{zhang2018fullyICML,ZhangCDC2019_DistributedRL,suttle2019multi,heredia2019distributed}. These works develop fully decentralized Actor-Critic methods where the agents maintain local estimates of the global value or policy functions, that depend on the states and actions of all other agents and update those estimates until they reach consensus. Then, these local estimates of the global value or policy functions are used to compute the policy gradient estimates needed for optimization. Since these policy gradient estimates require knowledge of the global state and action information, such Actor-Critic methods can not be used for cooperative MARL with partial state and action information.

In this paper, we propose a new distributed zeroth-order estimator of the local policy gradients, which is an extension of the one-point zeroth-order gradient estimator developed in \cite{zhang2020improving} for centralized optimization problems that estimates the gradient using the residual of the function values at two consecutive iterations. As such, it queries the function value only once at each iteration. Specifically, our proposed estimator computes the local policy gradients by locally perturbing the local agent policies, using information about the sum of local accumulated rewards that can be obtained using consensus; the sum of local accumulated rewards is global information that is not otherwise accessible to the local agents.The advantage of our proposed zeroth-order policy optimization method is that it makes it possible to estimate the local policy gradients without having access to the global Critic function, which is not available under partial state and action information.

{\bf Related work:} Zeroth-order policy optimization has been considered in \cite{fazel2018global,malik2018derivative} for a special case of single-agent RL problems, namely, Linear Quadratic Regulation (LQR) problems. These results were extended in \cite{li2019distributed} for distributed LQR problems. All these works use the one-point zeroth-order policy gradient estimator proposed in \cite{flaxman2005online,nesterov2013introductory}, which is known to have large variance that slows down learning \cite{duchi2015optimal}.
Instead, our proposed distributed residual-feedback estimator returns policy gradient estimates with significantly lower variance which improves the learning performance. 
Our work is also related to distributed optimization methods, as in \cite{hajinezhad2018gradient,sahu2018distributed,dixit2020online,bastianello2021distributed,yuan2021distributed}. In these works, the agents collaborate to find a decision variable that maximizes the sum of local objective functions. For this, the method in  \cite{dixit2020online} assumes that the gradient can be computed, while the methods proposed in  \cite{hajinezhad2018gradient,sahu2018distributed,bastianello2021distributed,yuan2021distributed} do not assume knowledge of the gradient and instead compute the zeroth-order gradient estimates of the local objective functions with respect to all decision variables, even those owned by other agents.
However, in MARL with partial observations, the agents can not have access to all decision variables and, therefore, can only compute the zeroth-order gradient estimate of their local objective functions with respect to those partial decision variables to which they have access. Therefore, the methods in \cite{hajinezhad2018gradient,sahu2018distributed} cannot be used to solve the MARL problems considered here. 
In fact, recent work on cooperative MARL \cite{yang2020q} relies on the assumption that the local agents can observe the team reward. This is in contrast to the problem considered here where the agents can only observe their local rewards, while the team rewards, which are the summation of local rewards, are not accessible locally. Therefore, the method in \cite{yang2020q} cannot be applied to the problem considered in this paper.
Related is finally work on multi-agent game formulations of MARL problems \cite{lanctot2017unified,srinivasan2018actor}. While these problems rely on partial state and action information, they are non-cooperative in nature since the goal of the agents is to optimize their local policies in order to maximize their local accumulated rewards, as opposed to maximizing the sum of their local accumulated rewards. As a result, in these problems, the agents converge to a Nash equilibrium point rather than an optimal policy that maximizes the global accumulated rewards.

{\bf Contributions:} In this paper, we propose a new distributed zeroth-order policy optimization method for general cooperative MARL problems.
Compared to the one-point policy gradient estimators in \cite{fazel2018global,malik2018derivative,li2019distributed}, our proposed residual-feedback policy gradient estimator reduces the variance of the policy gradient estimates and, therefore, improves the learning performance. Compared to the centralized estimator in \cite{zhang2020improving} that produces unbiased gradient estimates, the proposed distributed policy gradient estimator is biased due to possible consensus errors in distributedly estimating the sum of local accumulated rewards needed for the estimation of the policy gradients.  
We show that the proposed zeroth-order policy optimization method with constant stepsize converges to a neighborhood of a stationary point (policy) of the global objective function. The size of this neighborhood depends on the number of consensus steps needed to control the bias in the policy gradient estimates.
Moreover, we propose a value tracking method to reduce the numer of consensus steps needed to achieve a desired user-specified solution accuracy.
Finally, compared to existing distributed zeroth-order optimization methods \cite{hajinezhad2018gradient,sahu2018distributed}, our method is the first that does not require a perturbation of all decision variables or knowledge of all local objective function values to compute the zeroth-order gradients of the local objective functions, but still converges to a stationary point of the global objective function. To the best of our knowledge, this is the first work that provides convergence guarantees for collaborative multi-agent reinforcement learinng problems when global states and actions are partially observable and agents can only receive local rewards.

The rest of the paper is organized as follows. In Section 2, we present the MARL problem under consideration and introduce preliminary results on zeroth-order optimization. In Section 3, we develop our proposed algorithm and present the convergence analysis. In Section 4, we verify the effectiveness of our algorithms using numerical experiments. In Section 5, we conclude the paper.
\vspace{-2mm}

\section{Preliminaries and Problem Formulation}
\label{sec:MARL}
Consider a multi-agent system consisting of $N$ agents. The agent dynamics are governed by a Makov Decision Process (MDP) defined by a tuple $(\mathcal{S}, \mathcal{A}, \mathcal{R}, \mathcal{P}, \gamma)$, where $s_t = [s_{1,t}, s_{2,t}, \dots, s_{N,t}] \in \mathcal{S}$ and $a_t = [a_{1,t}, a_{2,t}, \dots, a_{N,t}] \in \mathcal{A}$ denote the joint state and action spaces of the $N$ agents at time instant $t$. The reward vector $r = [r_{1,t}, r_{2,t}, \dots, r_{N,t}] \in \mathcal{R}$ denotes the local rewards received by each agent at time $t$. The local reward $r_{i, t}(s_t, a_t, w_t)$ is affected by the joint state and action of all the agents in the network, and is also subject to noise $w_t$. The transition function $P(s_t, a_t, s_{t+1}) : \mathcal{S} \times \mathcal{A} \times \mathcal{S} \rightarrow [0, 1] \in \mathcal{P}$ denotes the probability of transitioning to state $s_{t+1}$ when the agents take action $a_t$ at state $s_t$. Let $o_{i,t} \in \mathcal{O}_i$ represent the local observation received at agent $i$ at time $t$, which contains partial entries of the joint state and action vectors, $s_t$ and $a_t$. Agent $i$ selects its action $a_{i,t}$ based on the observation $o_{i,t}$ using its local policy function $\pi_i: \mathcal{O}_i \rightarrow \mathcal{A}_i$. Let $\pi$ denote the joint policy function which consists of all local policy functions $\pi_i$. Then, the accumulated discounted reward received by agent $i$ is defined as $Q^\pi_i(s, a) = \mathbb{E}[ \sum_{t=0}^{T} \gamma^{t} r_{i,t} | s_0 = s, a_0 = a]$ or $V^\pi_i(s) = \mathbb{E}[ \sum_{t=0}^{T} \gamma^{t} r_{i,t} | s_0 = s]$,
when the agents start from the state-action pair $(s, a)$ or state $s$, follow the joint policy $\pi$, and apply a discount factor $\gamma \leq 1$ to their future rewards \footnote{Although we consider a task with accumulated discounted rewards in this paper, our proposed methods can be easily adapted to task considering averaged rewards.}. 

Our goal in this paper is to find an optimal joint policy function $\pi^\ast$ that solves the problem $\max_{\pi} \; \frac{1}{N}\sum_{i = 1}^{N} J_i(\pi)$,
where $J_i(\pi) = \mathbb{E}_{(s_0,a_0) \sim \rho_0} [ Q_i^\pi(s_0, a_0) ]$ and $\rho_0$ is a distribution that the initial state-action pair is sampled from. 
To do so, we assume that the local policy function $\pi_i$ is parameterized as $\pi_i(\theta_{i,k})$, where $\theta_{i,k} \in \mathbb{R}^{d_i}$ is the local policy parameter during episode $k$. Stacking these local policy parameters into the global policy parameter vector $\theta \in \mathbb{R}^d$, we can rewrite the problem we consider in this paper as
\begin{equation}
\label{eqn:ProbForm_theta}
\max_{\theta \in \mathbb{R}^d }  \; J(\theta) := \frac{1}{N}\sum_{i = 1}^{N} J_i(\theta).
\end{equation}

\begin{asmp}
	\label{asmp:Obj}
	We assume that the local objective function $J_i(\theta)$ is non-convex and non-smooth for all $i = 1, 2,  \dots, N$.
\end{asmp}

Problem~\eqref{eqn:ProbForm_theta} can be solved using distributed Actor-Critic methods as in \cite{zhang2018fullyICML,ZhangCDC2019_DistributedRL}. 
These methods require that all agents maintain local estimates of the global value function or the global policy function and that these local estimates are parameterized in the same way and depend on the global states and actions of all other agents.
Therefore, they cannot be used for MARL with partial state and action information. Instead, in this paper, we propose a new distributed zeroth-order policy optimization method that relies on the stochastic gradient ascent update
\begin{align}
	\label{eqn:StochasticGradient}
	\theta_{k+1} = \theta_k + \alpha \nabla J(\theta_k) + \epsilon_k
\end{align}
to determine optimal policy parameters $\theta_k$ that solve Problem~\eqref{eqn:ProbForm_theta}, where $\epsilon_k$ represents the noise in gradient estimate and $\alpha$ denotes the stepsize. The key idea that makes it possible to use partial state and action information in the update~\eqref{eqn:StochasticGradient} is the use zeroth-order estimators $\tilde{\nabla} J(\theta_k)$ of the true policy gradient $\nabla J(\theta_k)$.
Zeroth-order gradient estimators have been recently proposed in \cite{duchi2015optimal,nesterov2017random,gasnikov2017stochastic}, and take the form
\begin{align}
	\tilde{\nabla} J(\theta_k) & = \frac{J(\theta_k  + \delta u_k, \xi_k)}{\delta} u_k, & \label{eqn:OnePoint}\\
	\text{or } \; \tilde{\nabla} J(\theta_k) & = \frac{J(\theta_k  + \delta u_k, \xi_k) - J(\theta_k  - \delta u_k, \xi_k)}{2\delta} u_k, & \label{eqn:TwoPoint}
\end{align}
where $J(\theta_k  + \delta u_k, \xi_k)$ is an unbiased noisy sample 
\footnote{ For RL problems, the noise vector $\xi_k$ in the function evaluation is due to noise in the initial state and action samples, the state transition dynamics and the reward signals.}
of the accumulated rewards under the perturbed policy $\theta_k  + \delta u_k$, $\delta \in \mathbb{R}$ is the exploration parameter, $u_k \in \mathbb{R}^d$ is the random exploration direction sampled independently from a standard multivariate normal distribution $\mathcal{N}(0, I_d)$ and $I_d$ is an identity matrix with dimension $d$. 
The estimator~\eqref{eqn:OnePoint} is called a {\it one-point estimator} because it requires one policy evaluation at a single perturbed policy, $\theta_k  + \delta u_k$. On the other hand, the estimator~\eqref{eqn:TwoPoint} is called {\it a two-point estimator} because it requires two policy evaluations at two different perturbed policies, $\theta_k  + \delta u_k$ and $\theta_k  - \delta u_k$. 
While the two-point estimator~\eqref{eqn:TwoPoint} typically produces gradient estimates with lower variance, it is difficult  to use in decentralized MARL problems.
 This is because, to compute zeroth-order policy gradient estimates in MARL, requires coordination between the agents to evaluate their local policies. Specifically, the agents need to initialize an episode, randomly perturb their local policies and implement the perturbed policies until the end of episode. This procedure requires synchronization, which can incur delays. And the more points are used to estimate the zeroth-order gradient, the longer these delays become as multi-point gradient estimation requires synchronization over multiple episodes to implement one udpate. In practice, this waiting time forces the agents to remain at a sub-optimal policy for longer than the proposed one-point gradient estimator.

To address the above limitations akin to estimators~\eqref{eqn:OnePoint} and \eqref{eqn:TwoPoint}, in this paper, we adopt the one-point residual-feedback policy gradient estimator
\begin{align}
	\tilde{\nabla} J(\theta_k) = \frac{J(\theta_k  + \delta u_k, \xi_k) - J(\theta_{k-1}  + \delta u_{k-1}, \xi_{k-1})}{\delta} u_k \label{eqn:Residual}
\end{align}
originally proposed in \cite{zhang2020improving}. Same as the estimator \eqref{eqn:OnePoint}, the estimator \eqref{eqn:Residual} only requires one policy evaluation at each iteration, but can use the history of policy evaluations to effectively reduce the variance of the current policy gradient estimate and, therefore, improve the learning rate.  
We note that the estimator \eqref{eqn:Residual} cannot be directly used to solve MARL problems where the agents can only observe the value of their local objective functions $J_i$. To address this challenge, in this paper, we let agents implement a finite number of consensus  steps to approximate the value of the global objective function $J$ and analyze the effect of consensus errors on the gradient estimator~\eqref{eqn:Residual}, as we discuss below. \footnote{We note that since the original submission of this paper, a new one-point zeroth-order estimator has been proposed in \cite{chen2022improve} that improves on the performance of the estimator in~\eqref{eqn:Residual}. However, this method is centralized and does not directly apply to the MARL problem under consideration.}

According to \cite{nesterov2017random, zhang2020improving}, both estimators~\eqref{eqn:OnePoint} and \eqref{eqn:Residual} provide unbiased gradient estimates of a smoothed function $J_\delta(\theta)$ at $\theta_k$, where $J_\delta(\theta)$ is defined as $J_\delta(\theta) := \mathbb{E}_u\big[ J(\theta + \delta u)\big]$ and $u$ is subject to a standard multivariate normal distribution. Therefore, updating the policy parameter $\theta_k$ as in \eqref{eqn:StochasticGradient} using the gradient estimates~\eqref{eqn:OnePoint} or \eqref{eqn:Residual} will in fact converge to a stationary point of the smoothed function $J_{\delta}(\theta)$ rather than a stationary point of the value function $J(\theta)$ that may be nonsmooth. To ensure that the stationary point found by this process is meaningful for the original MARL problem, we need to define appropriate optimality conditions that additionally ensure that $J_{\delta}(\theta)$ and $J(\theta)$ are close to each other. 
Specifically, we consider the following optimality criterion
\begin{equation}
\label{eqn:Criterion}
\|\nabla J_\delta(\theta)\|^2 \leq \epsilon, \; \text{ and } |J_\delta(\theta) - J(\theta)| \leq \epsilon_J,
\end{equation}
which suggests that $\theta$ is an $\epsilon-$stationary point of the smoothed value function $J_\delta(\theta)$, and that the smoothed value function $J_\delta(\theta)$ is $\epsilon_J$-close to the true value function $J(\theta)$. 
In this paper, we use the notation $\|\cdot\|$ to denote the Euclidean vector norm, or the induced matrix norm induced by the Euclidean norm.
To bound the distance between the smoothed function $J_{\delta}(\theta)$ and the original value funtion $J(\theta)$ in \eqref{eqn:Criterion}, we need the following assumption on the value function $J(\theta)$.
\begin{asmp}
	\label{asm:Lipschitz}
	The function $J(\theta)$ is Lipschitz with constant $L_0$, that is, $|J(\theta_1) - J(\theta_2)| \leq L_0 \|\theta_1 - \theta_2\|$, for all $\theta_1, \theta_2 \in \mathbb{R}^d$. 
\end{asmp}
We note that Assumption~\ref{asm:Lipschitz} can be restrictive in selecting the class of policy functions. However, even with Assumption~\ref{asm:Lipschitz}, to the best of our knowledge, this work is the first to show convergence of distributed policy optimization under partial observability. Furthermore, it is the first to show convergence of distributed zeroth-order optimization when the local agents can only perturb their local decision variables and observe their local objective function value. For comparison, the methods in \cite{hajinezhad2018gradient,sahu2018distributed} require that the full decision variable is perturbed at every local agent, which imply that they can all observe global states and actions.
Given Assumption~\ref{asm:Lipschitz}, the following result for the smoothed value function $J_{\delta}(\theta)$ holds.
\begin{lemma}
	\label{lem:ApproxLipschitz}
	(Gaussian Approximation \cite{nesterov2017random}) Given Assumption~\ref{asm:Lipschitz}, the smoothed function $J_\delta(\theta)$ satisfies $|J_\delta(\theta) - J(\theta)| \leq \delta L_0 \sqrt{d}, \; \text{for all }\theta \in \mathbb{R}^d$.
\end{lemma}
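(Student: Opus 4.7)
The plan is to exploit the integral representation of the smoothed function and reduce everything to a bound on $\mathbb{E}_u[\|u\|]$ for a standard Gaussian vector. Starting from the definition $J_\delta(\theta) = \mathbb{E}_u[J(\theta + \delta u)]$, I would write
\begin{equation*}
J_\delta(\theta) - J(\theta) = \mathbb{E}_u\bigl[J(\theta + \delta u) - J(\theta)\bigr],
\end{equation*}
where I have used that $J(\theta)$ does not depend on $u$ and can therefore be pulled inside the expectation.

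Next, I would apply Jensen's inequality in the form $|\mathbb{E}[X]| \leq \mathbb{E}[|X|]$ and then invoke Assumption~\ref{asm:Lipschitz} pointwise in $u$ to obtain
\begin{equation*}
|J_\delta(\theta) - J(\theta)| \leq \mathbb{E}_u\bigl[|J(\theta + \delta u) - J(\theta)|\bigr] \leq L_0 \delta \, \mathbb{E}_u\bigl[\|u\|\bigr].
\end{equation*}
At this point the proof has been reduced to estimating $\mathbb{E}_u[\|u\|]$ for $u \sim \mathcal{N}(0, I_d)$.

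For that final step I would use Jensen's inequality one more time, now applied to the concave function $\sqrt{\cdot}$:
\begin{equation*}
\mathbb{E}_u[\|u\|] = \mathbb{E}_u\bigl[\sqrt{\|u\|^2}\bigr] \leq \sqrt{\mathbb{E}_u[\|u\|^2]} = \sqrt{d},
\end{equation*}
where the last equality holds because $\|u\|^2 = \sum_{i=1}^d u_i^2$ is a sum of $d$ independent standard $\chi^2_1$ random variables, each with unit mean. Combining this with the previous display yields $|J_\delta(\theta) - J(\theta)| \leq \delta L_0 \sqrt{d}$, as claimed.

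There is no genuine obstacle here; the only mild subtlety is the Jensen step $\mathbb{E}[\|u\|] \leq \sqrt{d}$, which gives a tight constant sufficient for this lemma even though the exact value $\mathbb{E}[\|u\|] = \sqrt{2}\,\Gamma((d+1)/2)/\Gamma(d/2)$ is slightly smaller. Using the square-root Jensen bound keeps the argument dimension-free in structure and yields exactly the $\sqrt{d}$ factor stated in the lemma.
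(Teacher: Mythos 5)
Your proof is correct and is exactly the standard argument behind this result: the paper does not prove the lemma itself but cites it from Nesterov and Spokoiny, whose proof proceeds precisely as you do, writing $J_\delta(\theta)-J(\theta)=\mathbb{E}_u[J(\theta+\delta u)-J(\theta)]$, applying the Lipschitz bound pointwise, and estimating $\mathbb{E}_u[\|u\|]\leq\sqrt{\mathbb{E}_u[\|u\|^2]}=\sqrt{d}$. Nothing is missing; your remark that the Jensen step is slightly loose relative to the exact chi-distribution mean is accurate but immaterial to the stated bound.
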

According to Lemma \ref{lem:ApproxLipschitz}, to control the approximation accuracy of the smoothed function $J_\delta(\theta)$, the parameter $\delta$ needs to be selected appropriately. The choice of this parameter will be discussed in Section~\ref{sec:Algorithm}.
Note that although the random exploration direction $u_k \sim \mathcal{N}(0, I_d)$ needed to evaluate the estimator~\eqref{eqn:Residual} can be sampled in a fully decentralized way, the global value $J(\theta_k + \delta u_k) = \frac{1}{N}\sum_{i = 1}^{N} J_i(\theta_k  + \delta u_k, \xi_k)$ is not accessible by the local agents. In the next section, we design a new algorithm that relies on partial state and action information only to produce a fully decentralized implementation of the estimator~\eqref{eqn:Residual}. 


\section{Algorithm Design and Theoretical Analysis}
\label{sec:Algorithm}
In this section, we propose a fully distributed zeroth-orther policy optimization algorithm for MARL that employs the residual-feedback zeroth-order policy gradient estimator~\eqref{eqn:Residual}. 
Specifically, we first introduce a consensus step so that the global value $J(\theta_k + \delta u_k, \xi_k)$ in the estimator \eqref{eqn:Residual} can be computed locally. Given a finite number of consensus iterations, the local estimates of $J(\theta_k + \delta u_k, \xi_k)$ will be inexact and, therefore, the local policy gradient estimates will be biased. 
To control this bias, we then introduce a value tracking technique that reduces the bias at the current episode using the local estimates of $J(\theta_k+\delta u_k, \xi_k)$ from previous episodes.
Finally, we provide convergence results showing that the proposed distributed zeroth-order policy optimization method with constant stepsize converges to a neighborhood of the stationary point of the smoothed global objective function.The size of this neighborhood is controlled by the number of consensus steps during each episode. Proofs of all theoretical results that follow can be found in the Appendix.

Our proposed algorithm is summarized in Algorithm~\ref{alg:DZO}. 
In what follows, we also assume that the $N$ agents form a communication graph $\mathcal{G} = (\mathcal{V}, \mathcal{E})$, where $\mathcal{V} = \{1, 2, \dots, N\}$ is the index set of agents and $\mathcal{E}$ represents the set of edges. The edge $(i,j) \in \mathcal{E}$ if agents $i$ and $j \in \mathcal{N}$ can directly send information to each other. Moreover, we define by $W \in \mathbb{R}^{N \times N}$ a weight matrix associated with the graph $\mathcal{G}$ such that the entry $W_{ij} > 0$ when $(i,j) \in \mathcal{E}$ and $W_{ij} = 0$ otherwise. 
Note that the communication graph $\mathcal{G}$ is independent of the coupling between agents in their state transition function $\mathcal{P}(s_t, a_t, s_{t+1})$ and reward functions $\{r_i(s_t, a_t)\}$ defined in Section~\ref{sec:MARL}.

\begin{algorithm}[t]
	\small
	\SetAlgoLined
	\KwIn{Exploration parameter $\delta$, stepsize $\alpha$, consensus matrix $W$, number of consenseus steps $N_c$, initial policy parameter $\theta_0$, discount ratio $\gamma$, maximum number of time steps run per episode $t_{\max}$, number of episodes $K$, and the logic variable $\mathtt{DoTracking}$.}
	Set $\mu_i^{-1}(N_c) = 0$ for all $i = 1, 2, \dots, N$ \;
	\For{episode $k = 0, 1, 2, \dots, K$}{
		For agents $i = 1, 2, \dots, N$, let agent $i$ sample a random exploration direction $u_{i,k}$ from the standard multivariate normal distribution \;
		
		Let all agents implement their perturbed policy $\pi_i(\theta_{i,k} + \delta u_{i,k})$ for $t_{\max}$ time steps and construct unbiased estimates of their local accumulated rewards $\{ J_i(\theta_k + \delta u_k, \xi_k)\}$ \;
		
		For all agent $i = 1, 2, \dots, N$, \\
		\eIf{$\mathtt{DoTracking} = \mathtt{False} \; \mathbf{ or } \; k == 0$}
		{set $\mu_i^k(0) = J_i(\theta_k + \delta u_k, \xi_k)$ \;}{
			set $\mu_i^k(0) = \mu_i^{k-1}(N_c) + J_i(\theta_k + \delta u_k, \xi_k) - J_i(\theta_{k-1} + \delta u_{k-1}, \xi_{k-1})$ \;}
		
		\For{ $m = 0, 1, 2, \dots, N_c-1$}{
			For agents $i = 1, 2, \dots, N$, let agent $i$ send $\mu_i^k(m)$ to its direct neighbors $j \in \mathcal{N}_i$ and conduct local averaging by computing $\mu_i^k(m+1) = \sum_{j \in \mathcal{N}_i} W_{ij} \mu_j^k(m)$ \;
		}
		For agents $i = 1, 2, \dots, N$, let agent $i$ update its current policy parameter $\theta_{i, k}$ by
		\begin{align}
			\label{eqn:Alg_Residual}
			& \theta_{i,k+1} = \theta_{i,k} + \alpha  \frac{\mu_i^k(N_c) - \mu_i^{k-1}(N_c)}{\delta} u_{i,k}. & 
		\end{align} 
	}
\KwOut{ Uniformly sample an integer $k$ within the interval $[0, K]$ and output $\theta_k$. }
	\caption{Distributed Residual-Feedback Zeroth-Order Policy Optimization}
	\label{alg:DZO}
\end{algorithm}


\setlength{\textfloatsep}{8pt}

\subsection{Distributed Residual-Feedback Zeroth-Order Policy Optimization}
\label{sec:DZO}

In this section, we describe and analyze our proposed residual-feedback zeroth-order policy optimization algorithm in the absence of value tracking, i.e., when $\mathtt{DoTracking} = \mathtt{False}$ in Algorithm~\ref{alg:DZO} (line 6).
Specifically, at the beginning of episode $k$, the agents randomly perturb their current policy parameters $\theta_{k}$ using a random exploration direction $u_k$ and conduct on-policy local policy evaluation to obtain an unbiased estimate of the local accumulated rewards $\{ J_i(\theta_k + \delta u_k) \}_{i = 1 ,2, \dots, N}$ (lines 3-4). To conduct local policy evaluations, existing MARL methods \cite{zhang2018fullyICML,ZhangCDC2019_DistributedRL} usually assume that the global state-action pairs $(s_t, a_t)$ are available to all local agents. Under this assumption, it is possible to update the local Critic functions $Q_i^\pi(s_t, a_t)$ in \cite{zhang2018fullyICML,ZhangCDC2019_DistributedRL} to reduce the variance of policy evaluations and, therefore, the variance of the policy gradient estimates \cite{konda2000actor}. However, when the agents only have access to local observations $o_{i,t}$ which contain partial entries of $(s_t, a_t)$, these methods cannot be used. Therefore, in this paper, evaluate the local policies as $J_i(\theta_k + \delta u_k, \xi_k) = r_i(1) + \gamma r_i(2) + \gamma^2 r_i(3) + \dots + \gamma^{t_{max} - 1} r_i(t_{max})$,
same as in REINFORCE \cite{williams1992simple}. This policy evaluation method can be implemented in a fully decentralized way but is subject to large variance, which increases the variance of the zeroth-order policy gradient estimates and degrades the converegence speed of the algorithm. 
The residual-feedback policy gradient estimator~\eqref{eqn:Residual} can effectively reduce this variance as we will discuss later.
In what follows, we make the following assumption on the local policy value estimator.

\begin{asmp}
	\label{asm:UnbiasedEval}
	For all agents, the local policy evaluation is subject to bounded variance. That is, $\mathbb{E}\big[ ( J_i(\theta, \xi) - J_i(\theta) )^2 \big] \leq \sigma^2$ for $i = 1, 2, \dots, N$.
\end{asmp}
After all agents compute  local policy values $J_i(\theta_{i, k} + \delta u_k, \xi_k)$ as $J_i(\theta_k + \delta u_k, \xi_k) = r_i(1) + \gamma r_i(2) + \gamma^2 r_i(3) + \dots + \gamma^{t_{max} - 1} r_i(t_{max})$, which is an unbiased estimation according to \cite{williams1992simple}, they conduct $N_c$ rounds of local averaging on their local policy values $\{ J_i(\theta_k + \delta u_k, \xi_k) \}_{i = 1 ,2, \dots, N}$ (lines 7, 11-13). As a result, they obtain inexact estimates $\mu_i^k(N_c+1)$ of the global accumulated rewards $J(\theta_k + \delta u_k, \xi_k)$. To bound this estimation error, we need the following two assumptions.
\begin{asmp}
	\label{asm:Graph}
	The undirected communication graph $\mathcal{G}$ is connected and fixed for all episodes. In addition, the associated weight matrix $W$ is doubly stochastic. That is, $W \mathbf{1}_N = \mathbf{1}_N $ and $W^T \mathbf{1}  = \mathbf{1}_N$, where $\mathbf{1}_N$ denotes a $N$ dimensional vector of elements $1$.
\end{asmp}

For simplicity of notations, in the following we neglect the subscript $N$ in $\mathbf{1}_N$ and write it as $\mathbf{1}$. The dimension of the vector can be taken as appropriate within its context.
Assumption~\ref{asm:Graph} is introduced to ensure that agents' local estimates $\mu_i^k$ of the global objective function value can reach consensus on their average $\frac{1}{N} \sum_i {\mu_i^k}$ within an error that depends on the number of consensus steps. In this paper we assume that the communication networks is fixed. If it is not, then consensus methods for time-varying graphs, such as that proposed in \cite{nedic2017achieving} can be used to estimate the global objective function value. The analysis in this case is left for future research
\begin{asmp}
	\label{asm:BoundVal}
	The local values $J_i(\theta, \xi)$ are upper bounded by $J_u$ and lower bounded by $J_l$ for all $i = 1, 2, \dots, N$ and all policy parameters $\theta$.
\end{asmp}

Assumption~\ref{asm:BoundVal} can be easily satisfied for episodic RL problems with bounded rewards over all state-action pairs.
Let $\vec{\mu}^k(m) = [\mu_1^k(m), \dots, \mu_N^k(m)]^T$. Then, we can show the following lemma.
\begin{lemma}
	\label{lem:ConsensusErr_1}
	Given Assumptions~\ref{asm:Graph} and \ref{asm:BoundVal}, we have that $\|\vec{\mu}^k(N_c) - J(\theta_{k} + \delta u_k, \xi_k) \mathbf{1}\| \leq \rho_W^{N_c} \sqrt{N} (J_u - J_l)$, where $\rho_W = \|W - \frac{1}{N}\mathbf{1}\mathbf{1}^T\| < 1$. 
\end{lemma}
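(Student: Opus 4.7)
The plan is to reduce the consensus dynamics to a single matrix-power expression and then exploit the fact that the averaging error depends only on the spread of the initial values, not on their magnitude.

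First I would write the per-agent averaging step $\mu_i^k(m{+}1)=\sum_{j\in\mathcal N_i}W_{ij}\mu_j^k(m)$ in vector form as $\mu^k(m{+}1)=W\mu^k(m)$, so that after $N_c$ rounds $\mu^k(N_c)=W^{N_c}\mu^k(0)$, where $\mu^k(0)=[J_1(\theta_k+\delta u_k,\xi_k),\dots,J_N(\theta_k+\delta u_k,\xi_k)]^\top$. Since $J(\theta_k+\delta u_k,\xi_k)=\tfrac1N\sum_i J_i(\theta_k+\delta u_k,\xi_k)=\tfrac1N\mathbf 1^\top\mu^k(0)$, the target vector can be written as $J(\theta_k+\delta u_k,\xi_k)\mathbf 1=\tfrac1N\mathbf 1\mathbf 1^\top\mu^k(0)$. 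Hence the consensus error is $\bigl(W^{N_c}-\tfrac1N\mathbf 1\mathbf 1^\top\bigr)\mu^k(0)$.

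Next I would use Assumption~\ref{asm:Graph} to simplify this operator. Because $W$ is doubly stochastic, $W\cdot\tfrac1N\mathbf 1\mathbf 1^\top=\tfrac1N\mathbf 1\mathbf 1^\top\cdot W=\tfrac1N\mathbf 1\mathbf 1^\top$, and $\bigl(\tfrac1N\mathbf 1\mathbf 1^\top\bigr)^2=\tfrac1N\mathbf 1\mathbf 1^\top$. A direct induction then gives
\[
W^{N_c}-\tfrac1N\mathbf 1\mathbf 1^\top=\bigl(W-\tfrac1N\mathbf 1\mathbf 1^\top\bigr)^{N_c}.
\]
Decomposing $\mu^k(0)=\bar\mu\mathbf 1+(\mu^k(0)-\bar\mu\mathbf 1)$ with $\bar\mu=J(\theta_k+\delta u_k,\xi_k)$, and observing that $\bigl(W-\tfrac1N\mathbf 1\mathbf 1^\top\bigr)\mathbf 1=0$, the mean component is annihilated, so
\[
\mu^k(N_c)-\bar\mu\mathbf 1=\bigl(W-\tfrac1N\mathbf 1\mathbf 1^\top\bigr)^{N_c}\bigl(\mu^k(0)-\bar\mu\mathbf 1\bigr).
\]
Submultiplicativity of the spectral norm then yields a factor $\rho_W^{N_c}$ in front of $\|\mu^k(0)-\bar\mu\mathbf 1\|$; the bound $\rho_W<1$ follows from the fact that $W-\tfrac1N\mathbf 1\mathbf 1^\top$ removes the Perron eigenvalue of a doubly stochastic matrix on a connected graph.

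Finally I would bound the initial dispersion using Assumption~\ref{asm:BoundVal}. Since every $J_i\in[J_l,J_u]$, their average $\bar\mu$ also lies in $[J_l,J_u]$, so $|\mu_i^k(0)-\bar\mu|\le J_u-J_l$ for each $i$, giving $\|\mu^k(0)-\bar\mu\mathbf 1\|\le\sqrt N\,(J_u-J_l)$. Combining the two estimates produces the claimed inequality. The only mildly delicate point, and the main thing to be careful about, is to split off the mean before applying the norm bound: bounding $\|\mu^k(0)\|$ directly would give an estimate that scales with $\max(|J_u|,|J_l|)$ rather than with the spread $J_u-J_l$, and would lose the crucial property that constant disagreement vectors contribute nothing to the consensus error.
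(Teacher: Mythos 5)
Your proposal is correct and follows essentially the same route as the paper's proof: use double stochasticity to show the average $\tfrac1N\mathbf 1^\top\mu^k(m)$ is preserved, contract the disagreement component by $\rho_W$ per consensus round (the paper iterates step by step where you collapse it into the single identity $W^{N_c}-\tfrac1N\mathbf 1\mathbf 1^\top=(W-\tfrac1N\mathbf 1\mathbf 1^\top)^{N_c}$, a cosmetic difference), and bound the initial dispersion by $\sqrt N(J_u-J_l)$ via Assumption~\ref{asm:BoundVal}. No gaps.
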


Lemma~\ref{lem:ConsensusErr_1} shows that the bias in the local estimate $\vec{\mu}^k(N_c)$ can be controlled by choosing a large enough $N_c$, when the local policy values are upper and lower bounded by $J_u$ and $J_l$. Using the estimates $\vec{\mu}^k(N_c)$, the agents can then construct the decentralized policy gradients~\eqref{eqn:Alg_Residual} and update their policy parameters $\theta_{i,k}$ (line 14). This completes episode $k$.
The decentralized residual-feedback estimator \eqref{eqn:Alg_Residual} can reduce the variance of the policy gradient estimates, since the value estimate of the last policy iterate $\vec{\mu}_i^{k-1}(N_c)$ can provide a baseline to compare $\vec{\mu}_i^k(N_c)$ to.
Effectively, the value estimate of the last policy iterate has an analogous variance reduction effect to the state value $V^\pi(s)$ that is used as a baseline for the action value $Q^\pi(s, a)$ in Actor-Critic methods \cite{konda2000actor}. 
\footnote[3]{Note that the fact that the value of past policy iterates can also be used as a baseline to reduce the variance of policy gradient estimates may be of interest in its own right in the development of policy gradient methods.}
Next, we show how to select $N_c$ so that the optimality criterion~\eqref{eqn:Criterion} is satisfied.
\begin{theorem}
	\label{thm:DZO}
	{\bf (Learning Rate of Algorithm~\ref{alg:DZO} without Value Tracking)} Let Assumptions~\ref{asm:Lipschitz}, \ref{asm:UnbiasedEval}, \ref{asm:Graph} and \ref{asm:BoundVal} hold and define $\delta = \frac{\epsilon_J}{\sqrt{d}L_0}$, $  \alpha = \frac{\epsilon_J^{1.5}} {4d^{1.5}L_0^2\sqrt{K}}$, and $N_c \geq \log( \frac{\sqrt{\epsilon} \epsilon_J} { \sqrt{2} d^{1.5} L_0 (J_u - J_l))} ) / \log(\rho_W)$.
	Then, running Algorithm~\ref{alg:DZO} with $\mathtt{DoTracking} = \mathtt{False}$, we have that $\frac{1}{K} \sum_{k = 0}^{K-1} \mathbb{E}[ \| \nabla J_\delta(\theta_k) \|^2 ] \leq \mathcal{O}( d^{1.5} \epsilon_J^{-1.5} K^{-0.5} ) + \frac{\epsilon}{2}$, where the expectation is taken over the trajectory of sampled vector $u_k$ and evaluation noise $\xi_k$.
\end{theorem}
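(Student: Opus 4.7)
The plan is to analyze Algorithm~\ref{alg:DZO} as a biased stochastic gradient ascent method on the Gaussian smoothed surrogate $J_\delta(\theta)$, and then translate the resulting rate into a guarantee on the optimality criterion \eqref{eqn:Criterion}. The first ingredient is smoothness: under Assumption~\ref{asm:Lipschitz}, the Gaussian-smoothed function $J_\delta$ is differentiable with a gradient that is $L_1$-Lipschitz, where $L_1 = L_0\sqrt{d}/\delta$ by the standard Nesterov--Spokoiny smoothing result accompanying Lemma~\ref{lem:ApproxLipschitz}. This will let me apply a descent lemma $J_\delta(\theta_{k+1}) \geq J_\delta(\theta_k) + \alpha \langle \nabla J_\delta(\theta_k), \tilde g_k\rangle - \tfrac{L_1 \alpha^2}{2}\|\tilde g_k\|^2$, where $\tilde g_k$ is the aggregated gradient estimate used in line~\eqref{eqn:Alg_Residual}.

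Next, I would decompose the algorithm's estimator as $\tilde g_k = g_k^{\text{ideal}} + e_k$, where the ideal piece $g_k^{\text{ideal}} = \frac{J(\theta_k + \delta u_k, \xi_k) - J(\theta_{k-1} + \delta u_{k-1}, \xi_{k-1})}{\delta} u_k$ is the centralized residual-feedback estimator \eqref{eqn:Residual}, and $e_k$ captures the consensus error on the two successive episodes. Because the paper already invokes the fact that $g_k^{\text{ideal}}$ is an unbiased estimator of $\nabla J_\delta(\theta_k)$, taking expectation of the inner product with $\nabla J_\delta(\theta_k)$ yields $\alpha \mathbb{E}\|\nabla J_\delta(\theta_k)\|^2$ from the ideal part, while Cauchy--Schwarz combined with Lemma~\ref{lem:ConsensusErr_1} bounds the remaining bias by $\alpha \|\nabla J_\delta(\theta_k)\| \cdot \tfrac{\sqrt{d}}{\delta}\cdot 2\rho_W^{N_c}\sqrt{N}(J_u - J_l)$, which I split via Young's inequality into a controllable fraction of $\alpha\|\nabla J_\delta(\theta_k)\|^2$ and an explicit consensus residual.

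The variance term $\mathbb{E}\|\tilde g_k\|^2$ is handled by the residual-feedback bound in the spirit of \cite{Zhang20ICML_Residual}: using Assumption~\ref{asm:UnbiasedEval} and Assumption~\ref{asm:BoundVal} together with the Lipschitz continuity of $J$, one shows that $\mathbb{E}\|g_k^{\text{ideal}}\|^2 \leq \mathcal{O}(d L_0^2 + d\sigma^2/\delta^2)$ and $\mathbb{E}\|e_k\|^2 \leq \mathcal{O}(d \rho_W^{2 N_c}(J_u - J_l)^2/\delta^2)$, so that $\mathbb{E}\|\tilde g_k\|^2$ is bounded by a polynomial in $d$ and $1/\delta$. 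Telescoping the descent inequality from $k=0$ to $K-1$ and dividing by $K$ gives
\begin{align*}
\frac{1}{K}\sum_{k=0}^{K-1}\mathbb{E}\|\nabla J_\delta(\theta_k)\|^2
\leq \frac{J_\delta^\star - J_\delta(\theta_0)}{\alpha K}
  + \frac{L_1 \alpha}{2K}\sum_{k=0}^{K-1}\mathbb{E}\|\tilde g_k\|^2
  + C \cdot \frac{d \,\rho_W^{2N_c}(J_u - J_l)^2}{\delta^2},
\end{align*}
for an absolute constant $C$. Plugging in $\delta = \epsilon_J/(\sqrt d L_0)$ (which by Lemma~\ref{lem:ApproxLipschitz} ensures the second part of \eqref{eqn:Criterion}) makes $L_1 = dL_0^2/\epsilon_J$, and the choice $\alpha = \epsilon_J^{1.5}/(4 d^{1.5}L_0^2 \sqrt{K})$ balances the first two terms to give the leading $\mathcal{O}(d^{1.5}\epsilon_J^{-1.5} K^{-0.5})$ rate. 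Finally, demanding that the last consensus term be at most $\epsilon/2$ yields exactly the logarithmic lower bound on $N_c$ stated in the theorem.

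The main obstacle I expect is the variance control for the residual-feedback estimator in the distributed setting: unlike in \cite{Zhang20ICML_Residual}, consecutive iterates are coupled not only through the policy drift but also through the consensus error incurred at both episodes $k$ and $k-1$, and the quantity $\mu_i^{k-1}(N_c)$ appearing in \eqref{eqn:Alg_Residual} must be carefully related to $J(\theta_{k-1}+\delta u_{k-1}, \xi_{k-1})$ without losing the cross-term cancellations that make the residual estimator low-variance. Keeping track of the interplay between the Young-inequality splitting constants, the smoothing parameter $\delta$, and the geometric consensus rate $\rho_W^{N_c}$ so that all error terms aggregate into the single $\epsilon/2$ constant will be the most delicate bookkeeping step.
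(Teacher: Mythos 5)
Your overall route is the same as the paper's: smoothness of $J_\delta$ with constant $\sqrt{d}L_0/\delta$, a descent inequality, decomposition of the update into the centralized residual-feedback estimator $g_\delta(\theta_k)$ plus a consensus-error term, unbiasedness of the ideal part, Young's inequality plus Lemma~\ref{lem:ConsensusErr_1} for the bias, telescoping, and the stated choices of $\delta$, $\alpha$, $N_c$. (The paper is slightly sharper on the bias: the term involving $\mu^{k-1}-\bar\mu^{k-1}\mathbf{1}$ vanishes exactly in conditional expectation because it is $\mathcal{F}_{k-1}$-measurable and $\Exp[u_k]=0$, so only the episode-$k$ consensus error needs Young's inequality; this is a constant-level difference only.)

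The genuine gap is the step where you assert $\Exp\|g_k^{\text{ideal}}\|^2 \le \mathcal{O}(dL_0^2 + d\sigma^2/\delta^2)$. This is not a standalone per-iterate fact: by Lipschitzness the residual $J(\theta_k+\delta u_k,\xi_k)-J(\theta_{k-1}+\delta u_{k-1},\xi_{k-1})$ is controlled by $L_0\|\theta_k-\theta_{k-1}\| + L_0\delta\|u_k-u_{k-1}\|$ plus noise, and $\|\theta_k-\theta_{k-1}\| = \alpha\|\tilde g_{k-1}\|$ is itself the previous (distributed, consensus-corrupted) gradient estimate, so the bound is self-referential — exactly the coupling you flag as "the main obstacle" but then leave unresolved. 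The paper resolves it with a dedicated recursion (its Lemma~\ref{lem:SecondMoment}), $\Exp\|g_\delta(\theta_k)\|^2 \le 8dL_0^2\frac{\alpha^2}{\delta^2}\Exp\|g_\delta(\theta_{k-1})\|^2 + 16(d+4)^2L_0^2 + \frac{8d\sigma^2}{\delta^2} + (\text{consensus terms})$, and uses the stepsize choice to make the contraction factor $8dL_0^2\alpha^2/\delta^2 \le 1/2$, after which telescoping yields a bound on $\frac{1}{K}\sum_k\Exp\|g_\delta(\theta_k)\|^2$; without this (or the cruder fallback $|{\rm residual}|\le J_u-J_l$ from Assumption~\ref{asm:BoundVal}, which you do not invoke here and which would reintroduce the one-point-style $(J_u-J_l)^2/\delta^2$ dependence), your variance term is unjustified. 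A smaller slip in the same place: the deterministic part of the bound is $\mathcal{O}((d+4)^2L_0^2)$, not $\mathcal{O}(dL_0^2)$, because it involves the fourth moment $\Exp[\|u_k-u_{k-1}\|^2\|u_k\|^2]=\mathcal{O}(d^2)$; this does not change the claimed $\mathcal{O}(d^{1.5}\epsilon_J^{-1.5}K^{-0.5})+\frac{\epsilon}{2}$ rate, but the recursion itself is an essential missing ingredient rather than bookkeeping.
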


As shown in Theorem~\ref{thm:DZO}, Algorithm~\ref{alg:DZO} converges to a neighborhood of the stationary point of the smoothed global objective function. 
Specifically, according to the optimality conditions in~\eqref{eqn:Criterion}, the approximation error on the smoothed objective function $J_\delta(\theta_k)$ compared to the original function $J(\theta_k)$ is controled by the parameter $\epsilon_J$. And the neighborhood around the stationary solution is characterized by the parameter $\epsilon$. Given the user specified parameters $\epsilon_J$ and $\epsilon$, Theorem~\ref{thm:DZO} says that the number of iterations $K$ can be selected according to the bound in \eqref{eqn:Thm3.2_12}. Note that computing this bound requires knowledge of the problem parameters $J_\delta^\ast - J_\delta(\theta_{0})$, $ \Exp[\|g_\delta(\theta_0)\|^2]$ and $L_0$. In practice, we can replace these parameters in \eqref{eqn:Thm3.2_12} by bounds selected sufficiently large. Note also that selecting the number $K$ according to \eqref{eqn:Thm3.2_12} can be conservative. We leave the study of tighter theoretical bounds on $K$ for our future research.

The size of the neighborhood $\epsilon$ can be controlled by choosing the number of consensus steps $N_c$. 
Specifically, according to Lemma~\ref{lem:ConsensusErr_1}, the number of steps $N_c$ controls the consensus error $\|\vec{\mu}^k(N_c) - J(\theta_{k} + \delta u_k, \xi_k) \mathbf{1}\|$, which in turn bounds the error $\epsilon$ of the solution; see the proof of Theorem~\ref{thm:DZO} in the Appendix.
Moreover, the number of consensus steps $N_c$ depends not only on the user-specified accuracy level $\epsilon$ and $\epsilon_J$, but also on the range of the policy bounds $J_u$ and $J_l$. This is because the consensus iteration at each episode is independent of those at previous episodes. 
Therefore, to select $N_c$ to control the estimation bias $| \mu_i^k(N_c) - J(\theta_k + \delta u_k, \xi_k) |$, we need to select the term $J_u-J_l$ in the definition of $N_c$ in Theorem~\ref{thm:DZO} as the difference between the initial estimates $\mu_i^k(0) = J_i(\theta_k + \delta u_k, \xi_k) \in [J_l, J_u]$ that have the maximum and minimum value.

\subsection{Distributed Residual-Feedback Zeroth-Order Policy Optimization with Value Tracking}
\label{sec:DZO_ValueTracking}

As discussed in Section~\ref{sec:DZO}, the estimation bias $|\mu_i^k(N_c)-J(\theta_k+\delta u_k, \xi_k)|$ at episode $k$ can be reduced by using local policy estimates from previous episodes.
Specifically, rather than resetting $\mu_i^k(0) = J_i(\theta_k + \delta u_k, \xi_k)$ in line 7 of Algorithm~\ref{alg:DZO}, we update it using the estimate $\mu_i^{k-1}(N_c)$ from the last episode as $\mu_i^k(0) = \mu_i^{k-1}(N_c) + J_i(\theta_k + \delta u_k, \xi_k) - J_i(\theta_{k-1} + \delta u_{k-1}, \xi_{k-1})$.
Then, we run $N_c$ consensus iterations on $\mu_i^k(0)$ as before. Let $\bar{\mu}^k(m) = \frac{1}{N} \sum_{i = 1}^{N} \mu_i^k(m)$. The following lemma shows that the value tracking updates preserve the global information $J(\theta_{k} + \delta u_k, \xi_k)$.
\begin{lemma}
	\label{lem:AvgMu}
	Let Assumption~\ref{asm:Graph} hold. Then, running Algorithm~\ref{alg:DZO} with $\mathtt{DoTracking} = \mathtt{True}$, we have that $\bar{\mu}^k(m) = J(\theta_{k} + \delta u_k, \xi_k) = \frac{1}{N}\sum_{i = 1}^{N} J_i(\theta_{i, k}, \xi_k)$, for all $m = 1, 2, \dots, N_c$ and all $k$.
\end{lemma}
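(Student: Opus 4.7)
The plan is to prove the invariant $\bar{\mu}^k(m) = J(\theta_k+\delta u_k,\xi_k)$ by a double induction: an outer induction on the episode index $k$, and an inner check that each of the $N_c$ consensus rounds within an episode preserves the network average. Both pieces are algebraic and use only Assumption~\ref{asm:Graph} together with the update rules in Algorithm~\ref{alg:DZO}.

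First I would establish the inner (average-preservation) step. Stacking the local quantities into $\mu^k(m)\in\mathbb{R}^N$, the consensus step in line 11 reads $\mu^k(m+1) = W\mu^k(m)$. Since $W$ is doubly stochastic (Assumption~\ref{asm:Graph}), $\mathbf{1}^T W = \mathbf{1}^T$, so
\[
\bar{\mu}^k(m+1) = \tfrac{1}{N}\mathbf{1}^T W\mu^k(m) = \tfrac{1}{N}\mathbf{1}^T\mu^k(m) = \bar{\mu}^k(m).
\]
Hence $\bar{\mu}^k(m) = \bar{\mu}^k(0)$ for every $m=0,1,\dots,N_c$, and it suffices to show $\bar{\mu}^k(0) = J(\theta_k+\delta u_k,\xi_k)$ for every $k$.

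Second, I would handle the outer induction on $k$. For the base case $k=0$, Algorithm~\ref{alg:DZO} uses the non-tracking branch (line 6 forces $\mathtt{DoTracking}=\mathtt{False}$ or $k==0$), so $\mu_i^0(0) = J_i(\theta_0+\delta u_0,\xi_0)$ and therefore $\bar{\mu}^0(0) = \frac{1}{N}\sum_{i=1}^{N} J_i(\theta_0+\delta u_0,\xi_0) = J(\theta_0+\delta u_0,\xi_0)$, matching the definition of the global value. For the inductive step, assume $\bar{\mu}^{k-1}(N_c) = J(\theta_{k-1}+\delta u_{k-1},\xi_{k-1})$. The value-tracking initialization in line 9 gives
\[
\mu_i^k(0) = \mu_i^{k-1}(N_c) + J_i(\theta_k+\delta u_k,\xi_k) - J_i(\theta_{k-1}+\delta u_{k-1},\xi_{k-1}).
\]
Averaging over $i$ and applying the inductive hypothesis cancels the two $J(\theta_{k-1}+\delta u_{k-1},\xi_{k-1})$ terms and leaves exactly $J(\theta_k+\delta u_k,\xi_k)$. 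Combined with the inner step, this closes the induction.

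There is no real obstacle here; the proof is essentially a bookkeeping exercise showing that dynamic/gradient tracking plus a doubly stochastic mixing matrix conserves the network average. The only subtlety worth flagging is the role of the $k=0$ branch in line 6, which is what anchors the induction — without that special initialization the recursion would have no well-defined starting value for $\mu_i^{-1}(N_c)$ (indeed Algorithm~\ref{alg:DZO} sets $\mu_i^{-1}(N_c)=0$, which is not in general equal to $J(\theta_{-1}+\delta u_{-1},\xi_{-1})$, so the tracking branch is deliberately skipped at $k=0$).
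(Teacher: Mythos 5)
Your proof is correct and follows essentially the same route as the paper's: average preservation across consensus rounds via the doubly stochastic $W$, followed by induction on the episode index $k$ using the value-tracking initialization, with the base case anchored by the $k=0$ (non-tracking) branch.
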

Lemma~\ref{lem:AvgMu} implies that the local estimation bias $|\mu_i^k(N_c) - J(\theta_{k} + \delta u_k, \xi_k) |$ is equal to the consensus error $|\mu_i^k(N_c) - \bar{\mu}^k(N_c)|$. Using value tracking, the bias at episode $k$ can be controlled by the consensus steps of past episodes. This is formally shown in the following lemma. 
\begin{lemma}
	\label{lem:ConsensusErr_2}
	Let Assumptions~\ref{asm:Lipschitz}, \ref{asm:UnbiasedEval}, \ref{asm:Graph} hold and define  $E_\mu^k = \| \vec{\mu}_k(N_c) - \bar{\mu}_k(N_c) \mathbf{1}\|$. Then, running Algorithm~\ref{alg:DZO} with $\mathtt{DoTracking} = \mathtt{True}$, we have that
	 $\mathbb{E}\big[ (E_\mu^k)^2 \big] \leq \bigg( 2\mathbb{E}\big[ (E_\mu^{k-1})^2 \big] + 32dL_0^2\frac{\alpha^2}{\delta^2}\mathbb{E}\big[ (E_\mu^{k-1})^2 \|u_{k-1}\|^2 \big] + 32d^2 L_0^2$ $\frac{\alpha^2}{\delta^2} \mathbb{E}\big[ (E_\mu^{k-2})^2 \big] \bigg) \rho_W^{2N_c} + 16NL_0^2 \alpha^2 \mathbb{E}\big[ \| \tilde{\nabla} J(\theta_{k-1})\|^2 \big] \rho_W^{2N_c} + 32Nd L_0^2 \delta^2 \rho_W^{2N_c} + 16N\sigma^2 \rho_W^{2N_c}$.
\end{lemma}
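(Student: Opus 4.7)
My plan is to unroll the consensus operator and then trace how the bias propagates through the value-tracking update. First I would observe that, by Assumption~\ref{asm:Graph}, $W$ is doubly stochastic so the orthogonal projection $P := I - \frac{1}{N}\mathbf{1}\mathbf{1}^T$ commutes with $W$ and satisfies $\|PW^{N_c}\| = \rho_W^{N_c}$. Since $\mu^k(N_c) = W^{N_c}\mu^k(0)$, this yields
\[
E_\mu^k = \|P\mu^k(N_c)\| \le \rho_W^{N_c}\|P\mu^k(0)\|.
\]
Next I would substitute the value-tracking recursion $\mu^k(0) = \mu^{k-1}(N_c) + R^k - R^{k-1}$, where $R_i^k := J_i(\theta_k + \delta u_k,\xi_k)$, apply $P$, and use $(a+b)^2 \le 2a^2+2b^2$ together with $\|P\|\le1$ to obtain
\[
(E_\mu^k)^2 \le \rho_W^{2N_c}\bigl[\,2(E_\mu^{k-1})^2 + 2\|R^k - R^{k-1}\|^2\,\bigr].
\]
All that remains is to bound $\mathbb{E}\|R^k-R^{k-1}\|^2$.

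For that bound I would split each coordinate $R_i^k - R_i^{k-1}$ into two zero-mean estimation noises (each of variance at most $\sigma^2$ by Assumption~\ref{asm:UnbiasedEval}) plus the deterministic piece $J_i(\theta_k+\delta u_k) - J_i(\theta_{k-1}+\delta u_{k-1})$. The Lipschitz hypothesis (Assumption~\ref{asm:Lipschitz}), followed again by $(a+b)^2\le 2a^2+2b^2$, controls the deterministic piece by a multiple of $\|\theta_k-\theta_{k-1}\|^2 + \delta^2\|u_k-u_{k-1}\|^2$. After summing over $i$ and taking expectations, the noise part produces the $16N\sigma^2\rho_W^{2N_c}$ constant and, using $\mathbb{E}\|u_k\|^2 = d$, the $u_k-u_{k-1}$ part produces the $32NdL_0^2\delta^2\rho_W^{2N_c}$ constant.

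The crucial step is the treatment of $\|\theta_k-\theta_{k-1}\|^2$. From the update~\eqref{eqn:Alg_Residual} I have $\theta_{i,k}-\theta_{i,k-1} = (\alpha/\delta)\bigl(\mu_i^{k-1}(N_c)-\mu_i^{k-2}(N_c)\bigr)u_{i,k-1}$. I would decompose
\[
\mu_i^{k-1}(N_c)-\mu_i^{k-2}(N_c) = (e_i^{k-1}-e_i^{k-2}) + \bigl(\bar{\mu}^{k-1}(N_c)-\bar{\mu}^{k-2}(N_c)\bigr),
\]
where $e_i^k := \mu_i^k(N_c)-\bar{\mu}^k(N_c)$. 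The mean-field piece, by Lemma~\ref{lem:AvgMu}, recovers precisely $\alpha\tilde{\nabla}J(\theta_{k-1})$; the consensus-error piece, after $(a+b)^2\le 2a^2+2b^2$ and the pointwise bound $\sum_i (e_i^k)^2\|u_{i,k-1}\|^2 \le (E_\mu^k)^2\|u_{k-1}\|^2$, produces the two residual contributions featuring $(E_\mu^{k-1})^2\|u_{k-1}\|^2$ and $(E_\mu^{k-2})^2\|u_{k-1}\|^2$.

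The main obstacle will be the randomness bookkeeping at this last step. Since $u_{k-1}$ is drawn fresh at episode $k-1$, it is independent of everything determining $E_\mu^{k-2}$; I may therefore factor $\mathbb{E}\bigl[(E_\mu^{k-2})^2\|u_{k-1}\|^2\bigr] = d\,\mathbb{E}[(E_\mu^{k-2})^2]$, which accounts for the extra factor of $d$ multiplying $\mathbb{E}[(E_\mu^{k-2})^2]$ in the target inequality. By contrast, $E_\mu^{k-1}$ depends on $u_{k-1}$ through $\mu^{k-1}(0)$, so the coupled expectation $\mathbb{E}\bigl[(E_\mu^{k-1})^2\|u_{k-1}\|^2\bigr]$ cannot be separated and must be retained as is. Assembling the pieces produced in the three steps above and collecting coefficients delivers the claimed recursion.
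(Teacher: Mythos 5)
Your proposal is correct and follows essentially the same route as the paper's proof: project off the mean and use $\|W^{N_c}-\tfrac{1}{N}\mathbf{1}\mathbf{1}^T\|\le\rho_W^{N_c}$, expand the value-tracking recursion, bound $\|R^k-R^{k-1}\|^2$ via Assumptions~\ref{asm:Lipschitz} and \ref{asm:UnbiasedEval}, and substitute the update \eqref{eqn:Alg_Residual} with the decomposition into $\alpha\tilde{\nabla}J(\theta_{k-1})$ plus the consensus errors at episodes $k-1$ and $k-2$, using Lemma~\ref{lem:AvgMu} and the independence of $u_{k-1}$ from $E_\mu^{k-2}$ exactly as the paper does. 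Your coordinate-wise bound $\sum_i (e_i^k)^2\|u_{i,k-1}\|^2\le (E_\mu^k)^2\|u_{k-1}\|^2$ even yields slightly tighter constants ($N$ in place of $d$, $Nd$ in place of $d^2$), which still imply the stated inequality.
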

Compared to Lemma~\ref{lem:ConsensusErr_1}, the proposed value tracking technique makes it possible to bound the consensus error at episode $k$ with the consensus errors from episodes $k-1$ and $k-2$.  Furthermore, at each episode, this error is perturbed by the second order momentum of the policy gradient estimate~\eqref{eqn:Residual} which can be controlled by choosing a small stepsize $\alpha$ and a large number $N_c$. To see the benefit of this result, when the consensus errors from episodes $k-1$ and $k-2$ are small, value tracking needs fewer consensus iterations to achieve small consensus error at episode $k$. This is in contrast to the case without value tracking, where $N_c$ is selected regardless of previous consensus errors. The following result shows convergence of Algorithm~\ref{alg:DZO} using value tracking.
\begin{theorem}
	\label{thm:DZO_ValueTrack}
	{\bf (Learning Rate of Algorithm~\ref{alg:DZO} with Value Tracking)}
Let Assumptions~\ref{asm:Lipschitz}, \ref{asm:UnbiasedEval}, \ref{asm:Graph} hold and define	$\delta = \frac{\epsilon_J}{\sqrt{d}L_0}$, $ \alpha = \frac{\epsilon_J^{1.5}} {4d^{1.5}L_0^2\sqrt{K}}$, and 
$  N_c \geq \max\big( \log(\rho_W)^{-1} \log(\frac{1}{2\sqrt{2}}),  \log(\rho_W)^{-1}$ $ \log(\sqrt{ \frac{\epsilon}{ 4\big(G^2 \epsilon_J + 32 (d+4)^2d L_0^2  + 16 d^3L_0^2\sigma^2/\epsilon_J^2\big)}}) \big)$
where $G^2 = \max \bigg( \mathbb{E}\big[ \| \tilde{\nabla} J(\theta_0) \|^2\big], \frac{2 \epsilon_J \epsilon }{d K} + 32L_0^2(d+4)^2$ $+ 16d^2L_0^2 \frac{\sigma^2}{\epsilon_J^2} \bigg)$. Then, running Algorithm~\ref{alg:DZO} with $\mathtt{DoTracking} = \mathtt{True}$, we have that $\frac{1}{K} \sum_{k = 0}^{K-1} \mathbb{E}[ \| \nabla J_\delta(\theta_k) \|^2 ] \leq \mathcal{O}( d^{1.5} \epsilon_J^{-1.5} K^{-0.5} ) + \frac{\epsilon}{2}$,  where the expectation is taken over the trajectory of the sampled vector $u_k$ and evaluation noise $\xi_k$.
\end{theorem}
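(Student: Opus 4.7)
The plan is to follow the template of a standard SGD-style analysis for smooth, nonconvex objectives, adapted to the biased zeroth-order gradient estimate produced by the value-tracking scheme. First, using Assumption~\ref{asm:Lipschitz} and the Gaussian smoothing results of \cite{nesterov2017random}, I would recall that $J_\delta$ is differentiable with an $L$-Lipschitz gradient for some $L = \mathcal{O}(L_0 \sqrt{d}/\delta)$, and apply the descent lemma to $J_\delta(\theta_{k+1}) - J_\delta(\theta_k)$ along the update $\theta_{k+1} - \theta_k = \alpha \tilde{\nabla}J(\theta_k)$. By Lemma~\ref{lem:AvgMu}, the \emph{averaged} tracked value $\bar{\mu}^k(N_c)$ equals $J(\theta_k+\delta u_k,\xi_k)$ exactly, so I would decompose the local estimator~\eqref{eqn:Alg_Residual} into the ideal residual-feedback estimator (which is unbiased for $\nabla J_\delta(\theta_k)$) plus an error term whose norm is controlled by $E_\mu^k + E_\mu^{k-1}$.

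Next, I would take expectations, substitute the chosen $\delta$, $\alpha$, and telescope from $k=0$ to $K-1$. The $\langle \nabla J_\delta(\theta_k), \tilde{\nabla}J(\theta_k)\rangle$ cross-term reproduces the $\|\nabla J_\delta(\theta_k)\|^2$ we want to bound, while the quadratic term $\|\tilde{\nabla}J(\theta_k)\|^2$ contributes $\mathcal{O}(d)$ variance from the random direction $u_k$ plus a bias contribution proportional to $(E_\mu^k)^2/\delta^2$. Using Lemma~\ref{lem:ApproxLipschitz} to guarantee $|J_\delta - J| \le \epsilon_J$ and bounding the telescoped $J_\delta$ differences by $2(J_u - J_l + \epsilon_J)$, I would then arrive at an inequality of the form
\begin{equation*}
\frac{1}{K}\sum_{k=0}^{K-1}\mathbb{E}\bigl[\|\nabla J_\delta(\theta_k)\|^2\bigr] \le \mathcal{O}\!\left(\frac{d^{1.5}}{\epsilon_J^{1.5}\sqrt{K}}\right) + \frac{C}{K}\sum_{k=0}^{K-1}\mathbb{E}\bigl[(E_\mu^k)^2\bigr]/\delta^2,
\end{equation*}
so the remaining task is to show the running average of $\mathbb{E}[(E_\mu^k)^2]$ is at most $\mathcal{O}(\epsilon \delta^2)$.

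To do this, I would unroll the recursion from Lemma~\ref{lem:ConsensusErr_2}. The condition $N_c \ge \log(1/(2\sqrt{2}))/\log \rho_W$ gives $\rho_W^{2N_c}\le 1/8$, making the coefficient in front of $\mathbb{E}[(E_\mu^{k-1})^2]$ strictly less than $1$ and rendering the linear part of the recursion contractive; the quadratic inhomogeneities $d L_0^2 \delta^2 \rho_W^{2N_c}$ and $\sigma^2 \rho_W^{2N_c}$ can be forced below $\mathcal{O}(\epsilon\delta^2/d)$ by the second lower bound on $N_c$. To deal with the $\alpha^2 \mathbb{E}[\|\tilde{\nabla}J(\theta_{k-1})\|^2]$ term, I would prove by induction on $k$ that $\mathbb{E}[\|\tilde{\nabla}J(\theta_k)\|^2]\le G^2$ with the $G^2$ defined in the theorem statement: the base case is handled by the definition of $G^2$, and the inductive step writes $\|\tilde{\nabla}J(\theta_k)\|^2$ as $\|u_k\|^2$ times a residual that can be expanded into a smoothness term ($\mathcal{O}(L_0^2\|u_k\|^2)$), a variance term ($\mathcal{O}(\sigma^2/\delta^2)$), and a bias term ($\mathcal{O}((E_\mu^k)^2/\delta^2)$), each controlled by the inductive hypothesis and the two lower bounds on $N_c$ exactly so that the right-hand side does not exceed $G^2$ again.

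The main obstacle is handling the two coupled recursions cleanly: Lemma~\ref{lem:ConsensusErr_2} bounds $\mathbb{E}[(E_\mu^k)^2]$ in terms of $\mathbb{E}[\|\tilde{\nabla}J(\theta_{k-1})\|^2]$, while bounding $\mathbb{E}[\|\tilde{\nabla}J(\theta_{k-1})\|^2]$ in turn requires a bound on $\mathbb{E}[(E_\mu^{k-1})^2]$. Breaking this circular dependence is precisely what motivates the inductive definition of $G^2$ and the particular choice of $N_c$; every constant in the hypothesis of the theorem (the $2G^2\epsilon_J$, the $64(d+4)^2 dL_0^2$, the $32 d^3 L_0^2 \sigma^2/\epsilon_J^2$) appears because it is exactly the quantity one must absorb into $\rho_W^{2N_c}$ to close the induction. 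Once both inductive bounds are established, substituting back into the telescoped descent inequality yields the advertised $\mathcal{O}(d^{1.5}\epsilon_J^{-1.5}K^{-0.5}) + \epsilon/2$ rate.
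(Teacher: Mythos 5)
Your proposal matches the paper's own argument in all essentials: a descent-lemma analysis of the smoothed objective $J_\delta$, a decomposition of the decentralized update into the unbiased residual-feedback estimator plus a consensus-error bias of order $(E_\mu^k)^2/\delta^2$, and a coupled induction (using the recursions of Lemma~\ref{lem:ConsensusErr_2} and the second-moment bound on the gradient estimate) in which the two lower bounds on $N_c$ and the definition of $G^2$ are exactly what close the circular dependence, followed by telescoping and dividing by $K$. The only differences are technical details the paper handles explicitly — a separate inductive bound on $\mathbb{E}[(E_\mu^k)^2\|u_k\|^2]$ since $E_\mu^k$ is correlated with $u_k$, and the base case for $E_\mu^0$ via extra consensus steps at the first episode — which your sketch would absorb without changing the approach.
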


In Theorem~\ref{thm:DZO_ValueTrack}, the constant $G^2$ represents the uniform bound on $\mathbb{E}[ \|\tilde{\nabla} J(\theta_{k})\|^2 ]$ for all $k = 1, 2, \dots, K$. 
Moreover, from the bounds on $N_c$ in Theorems~\ref{thm:DZO} and \ref{thm:DZO_ValueTrack}, when $\epsilon$ and $\epsilon_J$ are close to $0$, we obtain that $N_c \sim \mathcal{O}\big(\log( \frac{\sqrt{\epsilon} \epsilon_J}{d^{1.5} \sigma} ) / \log(\rho_W) \big)$ in Theorem~\ref{thm:DZO_ValueTrack} and $N_c \sim \mathcal{O}\big(\log( \frac{\sqrt{\epsilon} \epsilon_J }{d^{1.5} (J_u - J_l)} ) / \log(\rho_W) \big)$ in Theorem~\ref{alg:DZO}.
This suggests that the choice of $N_c$ in Theorem~\eqref{thm:DZO_ValueTrack} depends on the variance of function evaluation $\sigma^2$, while in Theorem~\ref{thm:DZO} the choice of $N_c$ depends on the range of value functions $[J_l, J_u]$. 
In practice, the standard deviation of function evaluation $\sigma$ can be much smaller than the range of its value $[J_l, J_u]$. To see this, note that by Assumption~\ref{asm:BoundVal}, we have that the noisy sample of the objective function value $J_i(\theta, \xi)$ has bounded support. Then, applying the Popoviciu’s inequality on variances, we get that the variance $\sigma^2$ satisfies that $\sigma^2 \leq \frac{1}{4} (J_u - J_l)^2$, that is, $\sigma \leq \frac{1}{2}|J_u - J_l|$.
Therefore, Algorithm~\ref{alg:DZO} with value tracking requires fewer consensus steps per episode than without value tracking.

\begin{rem}
	\label{rmk:onepoint_vt}
	The proposed value-tracking technique can also be combined with the existing distributed one-point policy gradient estmiator \cite{li2019distributed} to reduce the variance of its gradient estimates. To see this, note that the global value function $J(\theta_k + \delta u_k ,\xi_k)$ used in the one-point estimator~\eqref{eqn:OnePoint} can be replaced by the local esimate of the value $J(\theta_k + \delta u_k ,\xi_k)$, i.e., $\mu_i^k(N_c)$. Then, we obtain the following distributed one-point policy gradient estimator with value tracking:
	$ \tilde{\nabla}_{\theta_{i, k}}J(\theta_k) \approx \frac{\mu_i^k(N_c)}{\delta} u_{i,k}  =   u_{i,k} \big( \sum_{j \in \mathcal{N}_i} [W^{N_c}]_{ij} \big( \mu_j^{k-1}(N_c) + J_j(\theta_k + \delta u_k, \xi_k) - J_j(\theta_{k-1}+\delta u_{k-1}, \xi_{k-1})  \big) \big) / \delta. $
	We observe that the estimator  $\tilde{\nabla}_{\theta_{i, k}}J(\theta_k)$ has the same structure as the distributed residual-feedback policy gradient estimator without value tracking~\eqref{eqn:Alg_Residual} except for an additional noise term $\frac{\sum_{j \in \mathcal{N}_i} [W^{N_c}]_{ij} \mu_j^{k-1}(N_c)}{\delta} u_{i,k}$. Therefore, the variance of the estimator $\tilde{\nabla}_{\theta_{i, k}}J(\theta_k)$ is reduced through a similar mechanism as that of the distributed residual-feedback policy gradient estimator without value tracking. As a result, the learning performance is improved compared to that of the existing distributed one-point policy gradient estimator \cite{li2019distributed}, as we will demonstrate in the next section.
\end{rem}

\section{Experiments}
\label{sec:exp}
\begin{figure}[t]
	\centering
	\subfigure[\label{fig:R_OP}]{
		\includegraphics[width=.48\linewidth]{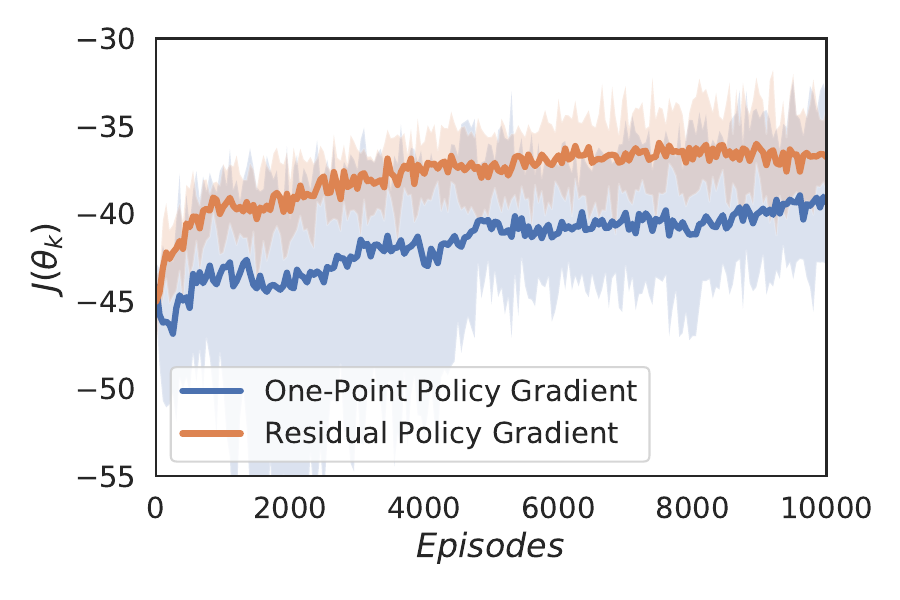}}
	\subfigure[\label{fig:R_OP_Tracking}]{
		\includegraphics[width=.48\linewidth]{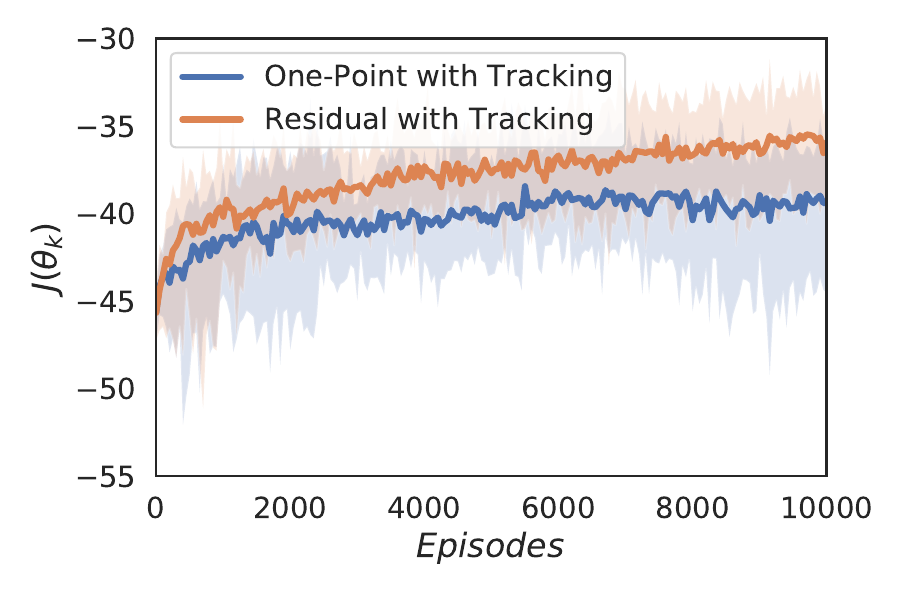}}
	\caption{Distributed zeroth-order policy optimization with the proposed residual-feedback estimator (5) (orange) versus the one-point estimator (3) (blue). In each case, Algorithm 1 is run 10 times. (a): Results without value tracking. (b): Results with value tracking. }
\end{figure}
\begin{figure}[t]
	\centering
	\subfigure[\label{fig:OP_Track}]{
		\includegraphics[width=.48\linewidth]{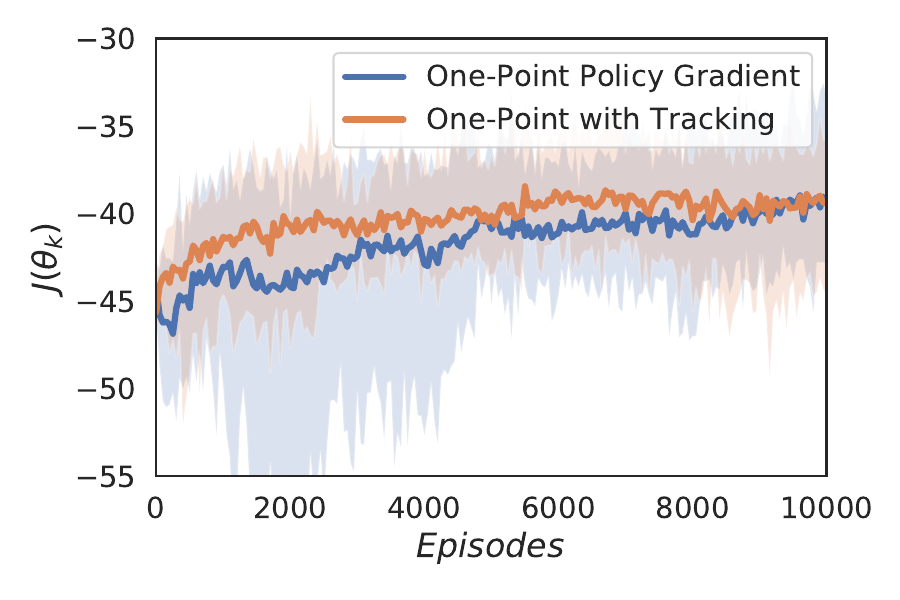}}
	\subfigure[\label{fig:Residual_Track}]{
		\includegraphics[width=.48\linewidth]{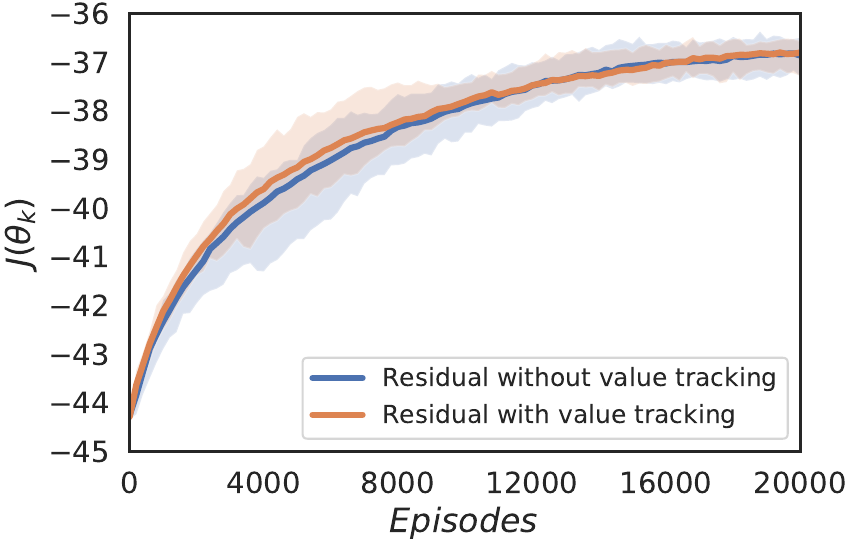}}
	\subfigure[\label{fig:Residual_Track_consensus}]{
	\includegraphics[width=.48\linewidth]{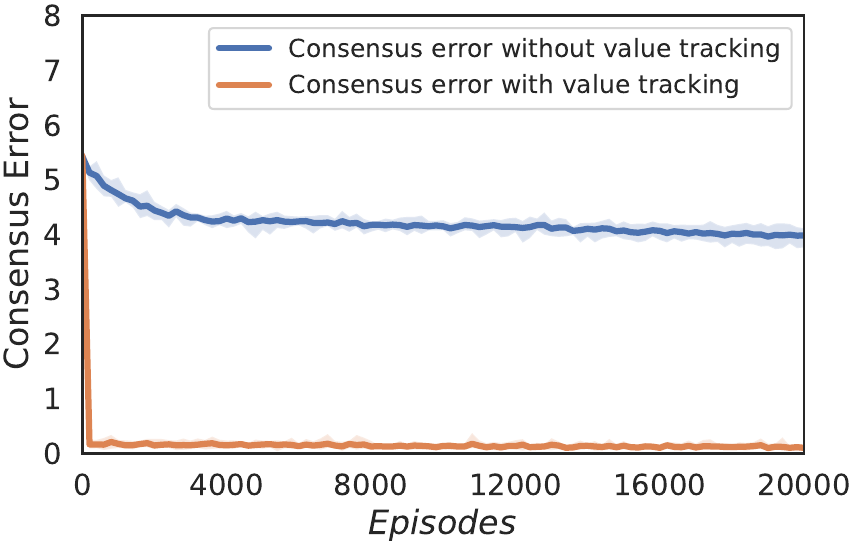}}
	\caption{Distributed zeroth-order policy optimization with value tracking (orange) versus without value tracking (blue). In each case, Algorithm~\ref{alg:DZO} is run 10 times. (a): Comparative results for the one-point estimator (3). (b): Comparative results for the proposed residual-feedback estimator (5). (c) Maximum absolute consensus errors $\max_i |\mu_i^k(N_c) - J(\theta_k + \delta u_k, \xi_k)|$ over episodes. }
\end{figure}
\begin{figure}[t]
	\centering
	\subfigure[\label{fig:RF_Track_Badgraph}]{
	\includegraphics[width=.48\linewidth]{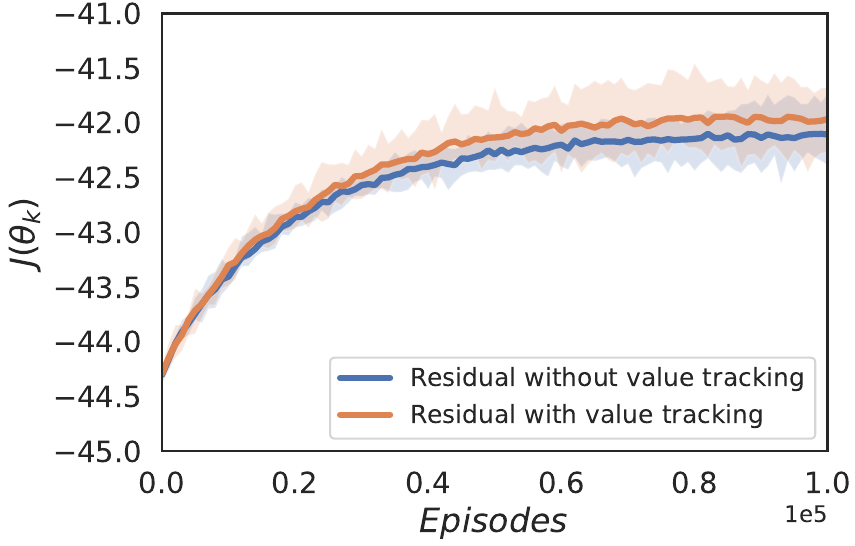}}
	\subfigure[\label{fig:Comparison_Nc}]{
	\includegraphics[width=.48\linewidth]{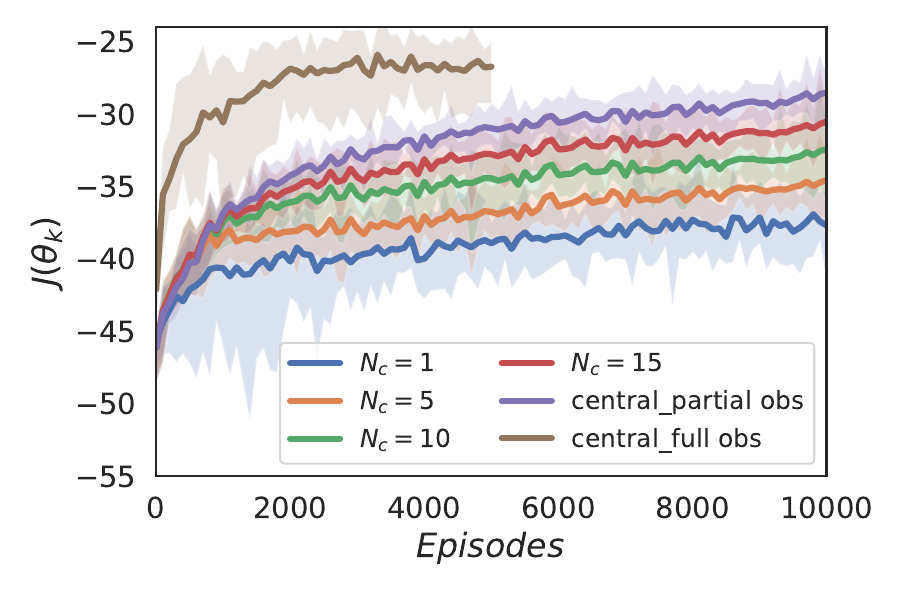}}
	\caption{(a) Algorithm~\ref{alg:DZO} with value tracking (orange) versus without value tracking (blue) under the communication graph which does not respect the coupling relationship among agents. (b) Comparative results for Algorithm~\ref{alg:DZO} with value tracking under different number of consensus steps $N_c$ and the centralized algorithms with partial and full observations. In each case, Algorithm~\ref{alg:DZO} is run 10 times. }
	\label{fig:Residual_Track_Diminishing}
\end{figure}

In this section, we illustrate our proposed MARL algorithm on stochastic multi-agent multi-stage decision making problems. Specifically, we conduct an ablation study to demonstrate the benefits of applying the decentralized residual-feedback zeroth-order policy gradient estimate~\eqref{eqn:Alg_Residual} and the value tracking technique separately. We consider $16$ agents that are located on a $4 \times 4$ grid. 
Agent $i$ has state $s_i(t) = [m_i(t), d_i(t)]$, which denotes the resources it stores and the  local demand it receives in the amount of $m_i(t)$ and $d_i(t)$ at time $t$, respectively. In the meantime, agent $i$ takes actions $a_{ij}(t)$, which denote the resources it shares with its neighbors $j\in\mathcal{N}_i$ in the grid. Specifically, $a_{ij}(t) \in [0, 1]$ denotes the fraction of resources agent $i$ sends to its neighbor $j$ at time $t$.
The local resources and demands at agent $i$ are defined as $m_i(t+1) = m_i(t) - \sum_{j \in \mathcal{N}_i} a_{ij}(t) m_i(t) + \sum_{j \in \mathcal{N}_i} a_{ji}(t) m_j(t) - d_i(t)$ and $d_i(t) = A_i \sin(\omega_i t + \phi_i) + w_{i,t}$,
where $w_{i,t}$ is the noise in the demand. At time $t$, agent $i$ receives a local reward $r_i(t)$, such that $r_i(t) = 0$ when $m_i(t) \geq 0$ and $r_i(t) = -m_i(t)^2$ when $m_i(t) < 0$. 
We consider a partial observation scenario, where agent $i$ can only observe its local resources and demands, that is, $o_i(t) = [m_i(t), d_i(t)]^T$. 
Agent $i$ determines its actions $\{ a_{ij}(t) \}$ using its local policy function $\pi_{i}(o_i(t) | \theta_i)$, where $\theta_i$ is the policy function parameter for agent $i$.
Specifically, we have that $a_{ij} = \exp(z_{ij}) / \sum_{j}\exp(z_{ij})$, where $z_{ij} = \sum_{p = 1}^{9} \|o_i - c_p\|^2 \theta_{ij}(p)$ and $c_p$ is the $p$-th feature parameter. 
We consider episodes of length $T=30$, and select the discount factor as $\gamma = 0.75$. 
The goal of the agents is to find the optimal joint policy parameter $\theta$, so that their total accumulated reward $J(\theta) = \mathbb{E}_{s_0, a_0 \sim \rho_0} \big[ \sum_{t = 0}^{30} \gamma^{t} \sum_{i = 1}^{N} r_i(t) \big | \pi(\theta)]$ is maximized.
The communication graph is assumed to be a chain graph. Moreover, we select the number of consensus steps $N_c = 1$, and show that Algorithm~\ref{alg:DZO} with value tracking can achieve policy improvement even in this challenging scenario. 
The stepsizes are selected so that the convergence speed is optimized. 

First, we compare the performance of Algorithm~\ref{alg:DZO} using the decentralized policy gradients~\eqref{eqn:Residual} and \eqref{eqn:OnePoint}, without value tracking. The learning progress is presented in Figure~\ref{fig:R_OP}. We observe that the decentralized residual-feedback policy gradient estimator has less variance than the existing one-point policy gradient estimator and, therefore, improves faster and finds a better policy in the end of the learning. The same effect is observed in Figure~\ref{fig:R_OP_Tracking}, when both estimators are implemented with value tracking. This suggests that the residual-feedback zeroth-order policy gradient estimator is superior to the one-point policy gradient estimator for decentralized policy optimization problems.

Next, we demonstrate the merit of using value tracking. Specifically, we first run Algorithm~\ref{alg:DZO} with the decentralized one-point policy gradient estimator~\eqref{eqn:OnePoint}, with and without value tracking. The difference in the performance is shown in Figure~\ref{fig:OP_Track}. We observe that using value tracking results in less variance and also achieves better policies. 
This is because the decentralized one-point policy gradient estimator with value tracking $\tilde{\nabla}_{\theta_{i, k}}J(\theta_k)$ has the same variance reduction effect on the policy gradient estimates as the residual feedback estimator~\ref{eqn:Residual}, as we have discussed in Remark~\ref{rmk:onepoint_vt}.
Figure~\ref{fig:Residual_Track} shows the results of using the residual-feedback policy gradient estimator~\eqref{eqn:Residual} with and without value tracking. 
We observe that Algorithm~\ref{alg:DZO} with value tracking performs slightly better in the mean than without value tracking. This is because value tracking can track the value of the global objective function better, as shown in Figure~\ref{fig:Residual_Track_consensus} where the maximum consensus error $\max_i |\mu_i^k(N_c) - J(\theta_k + \delta u_k, \xi_k)|$ at each episode $k$ is presented. 
The improvement achieved by value tracking is not very significant in Figure~\ref{fig:Residual_Track} because the underlying communication graph respects the coupling relationship among agents. 
Specifically, we say that the communication graph respects the coupling relationship between agents if the action of every agent $i$ that can directly communicate with an agent $j$ also directly affects the reward and transition function of that agent $j$.
In this case, the rewards received from an agent's local neighbors can approximate this agent's contribution to the global reward well even without tracking the information from other distant neighbors in the graph.
	
To further demonstrate the advantage of combining the decentralized residual-feedback gradient estimator with value tracking, we consider a challenging scenario where the communication graph does not respect the coupling relationship among agents as described above. The performance of Algorithm~\ref{alg:DZO} using the residual-feedback estimator~\eqref{eqn:Residual} with and without value tracking is presented in Figure~\ref{fig:RF_Track_Badgraph}. 
In this case, using rewards from the local neighbors does not approximate well the local agent's contribution to the global reward. Therefore, value tracking can help obtain a better estimate of the global reward information. As a result, the decentralized residual-feedback policy gradient estimator with value tracking outperforms the one without value tracking, as shown in Figure~\ref{fig:RF_Track_Badgraph}.
 The numerical results presented above show that the performance of Algorithm~\ref{alg:DZO} is affected by the structure of the communication graph among the agents. As shown in Lemma~\ref{lem:ConsensusErr_1}, the second largest singular value of the matrix $W$, $\rho_W$, captures the speed at which the agents can reach consensus on the global objective function value. As shown in (23) in Lemma~\ref{lem:SecondMoment}, the bound on the second moment of the gradient estimate is related to the consensus error terms, e.g., $\|\vec{\mu}^{k-1} - \bar{\mu}^{k-1} \mathbf{1} \|$,. Therefore, the smaller the value of $\rho_W$, the smaller is the second moment and, therefore, the variance of the gradient estimate. This suggests that the variance in learning can be further reduced for communication graphs that enable fast consensus.

Finally, we demonstrate the effect of the consensus steps $N_c$ on Algorithm~\ref{alg:DZO} by comparing to a centralized algorithm that uses the gradient estimator~\eqref{eqn:Residual} with both full and partial observations. Specifically, in the centralized algorithm, the value of the global objective function $J(\theta_k + \delta u_k, \xi_k)$ is directly provided to each local agent at each episode, and the local agents' policy functions receive all agents' states as inputs when full observations are assumed and only receive the neighboring agents' states as inputs when partial observations are assumed. As shown in Figure~\ref{fig:Comparison_Nc}, as the number of consensus steps $N_c$ increases, the performance of Algorithm~\ref{alg:DZO} with value tracking approaches that of the centralized algorithm with partial observations. And the performance of the centralized algorithm with partial observations slightly underperforms that of the centralized algorithm with full observations. This is because policy functions learned using partial observations constitute a subset of those that can be learned using full observations. Note that the centralized algorithm requires all the agents to observe the values of all local objective functions, which is not a scalable approach in practice.
Its performance is provided here as a benchmark to compare to the performance of the proposed distributed algorithm.

\section{Conclusion}
In this paper, we proposed a new distributed zeroth-order policy optimization method for MARL problems. Compared to existing MARL algorithms that require all the agents' states and actions to be accessible by every local agent, our algorithm can be applied even when each agent only observes partial states and actions. Specifically, we developed a new distributed residual-feedback zeroth-order estimator of the policy gradient and analyzed the effect of bias in the local policy gradient estimates on the convergence of the proposed MARL algorithm. 
Furthermore, we introduced a value tracking technique to reduce the number of consensus steps needed at each episode to control the bias in the estimation of the policy gradient. Finally, we provided numerical experiments on a stochastic multi-agent multi-stage decision making problem that demonstrated the effectiveness of both the decentralized residual-feedback policy gradient estimator and the value tracking technique.

\bibliographystyle{IEEEtran}
\bibliography{biblio}

\appendix

\section{Proof of Lemma~\ref{lem:ConsensusErr_1}}
{\bf Lemma 3.1.} Given Assumptions~\ref{asm:Graph} and \ref{asm:BoundVal}, we have that $\|\vec{\mu}^k(N_c) - J(\theta_{k} + \delta u_k, \xi_k) \mathbf{1}\| \leq \rho_W^{N_c} \sqrt{N} (J_u - J_l)$, where $\rho_W = \|W - \frac{1}{N}\mathbf{1}\mathbf{1}^T\| < 1$. 

\begin{proof}
	First, we show that $\frac{1}{N} \mathbf{1}^T \vec{\mu}^k(m) = J(\theta_k + \delta u_k, \xi_k)$ for all $m = 0, 1, \dots, N_c$. Note that the consensus step $\mu_i^k(m+1) = \sum_{j \in \mathcal{N}_i} W_{ij} \mu_j^k(m)$ (line 12 in Algorithm~\ref{alg:DZO}) can be equivalently written in a compact form as $\vec{\mu}^k(m+1) = W \vec{\mu}^k(m)$. Therefore, we have that
	\begin{align}
		\label{eqn:Lem3.1_1}
		\frac{1}{N} \mathbf{1}^T \vec{\mu}^k(m+1) = \frac{1}{N} \mathbf{1}^T  W \vec{\mu}^k(m) = \frac{1}{N} \mathbf{1}^T  \vec{\mu}^k(m),
	\end{align}
	where the second equality is due to Assumption~\ref{asm:Graph}, that the matrix $W$ is doubly stochastic. Extending equality~\eqref{eqn:Lem3.1_1} from $m$ to $0$, we obtain that $\frac{1}{N} \mathbf{1}^T \vec{\mu}^k(m) = \frac{1}{N} \mathbf{1}^T \vec{\mu}^k(0) = \frac{1}{N} \sum_{i = 1}^N J_i(\theta_k + \delta u_k, \xi_k) = J(\theta_k + \delta u_k, \xi_k)$, for $m = 0, 1, \dots, N_c$.
	
	Next, we show that $\|\vec{\mu}^k(m) - \frac{1}{N} \mathbf{1} \mathbf{1}^T \vec{\mu}^k(m) \| \leq \|W - \frac{1}{N} \mathbf{1} \mathbf{1}^T \|^{m} \|\vec{\mu}^k(0) - \frac{1}{N} \mathbf{1} \mathbf{1}^T \vec{\mu}^k(0) \|$, for $m = 1, 2, \dots, N_c$. To see this, we have that
	\begin{align}
		\label{eqn:Lem3.1_2}
		& \|\vec{\mu}^k(m+1) - \frac{1}{N} \mathbf{1} \mathbf{1}^T \vec{\mu}^k(m+1) \| \nonumber \\
		&  = \|W \vec{\mu}^k(m) - \frac{1}{N} \mathbf{1} \mathbf{1}^T W \vec{\mu}^k(m) \|  \nonumber \\
		& = \|W \vec{\mu}^k(m) - \frac{1}{N} \mathbf{1} \mathbf{1}^T \vec{\mu}^k(m) \|,
	\end{align}
	where the second equality is due to Assumption~\ref{asm:Graph}. According to \eqref{eqn:Lem3.1_2}, we have that $\|\vec{\mu}^k(m+1) - \frac{1}{N} \mathbf{1} \mathbf{1}^T \vec{\mu}^k(m+1) \| = \|(W - \frac{1}{N} \mathbf{1} \mathbf{1}^T) \vec{\mu}^k(m) \| = \|(W - \frac{1}{N} \mathbf{1} \mathbf{1}^T) (\vec{\mu}^k(m) - \frac{1}{N} \mathbf{1} \mathbf{1}^T \vec{\mu}^k(m))\|$. This is because $(W - \frac{1}{N} \mathbf{1} \mathbf{1}^T)  \frac{1}{N} \mathbf{1} \mathbf{1}^T \vec{\mu}^k(m) = 0$. Therefore, we get that
	\begin{align}
		& \|\vec{\mu}^k(m+1) - \frac{1}{N} \mathbf{1} \mathbf{1}^T \vec{\mu}^k(m+1) \| \nonumber \\
		& \leq \|W - \frac{1}{N} \mathbf{1} \mathbf{1}^T \| \| \vec{\mu}^k(m) - \frac{1}{N} \mathbf{1} \mathbf{1}^T \vec{\mu}^k(m) \| \nonumber \\
		& \leq \dots \leq \|W - \frac{1}{N} \mathbf{1} \mathbf{1}^T \|^{m+1} \| \vec{\mu}^k(0) - \frac{1}{N} \mathbf{1} \mathbf{1}^T \vec{\mu}^k(0) \|.
	\end{align}
	The first inequality is due to the definition of the induced matrix norm $\|W - \frac{1}{N} \mathbf{1} \mathbf{1}^T \|$. According to Assumption~\ref{asm:Graph}, we have that $\rho_W = \|W - \frac{1}{N} \mathbf{1} \mathbf{1}^T \| < 1$. Furthermore, recalling the fact that $\frac{1}{N} \mathbf{1}^T \vec{\mu}^k(m) =  \frac{1}{N} \mathbf{1}^T J_i(\theta_k + \delta u_k, \xi_k) = J(\theta_k + \delta u_k, \xi_k)$ for $m = 0, 1, \dots, N_c$, we obtain that
	\begin{align}
		\label{eqn:Lem3.1_3}
		& \|\vec{\mu}^k(N_c) -  J(\theta_k + \delta u_k, \xi_k) \mathbf{1}\| \nonumber \\
		& \leq \rho_W^{N_c} \|\vec{\mu}^k(0) -  J(\theta_k + \delta u_k, \xi_k) \mathbf{1}\|.
	\end{align}
	Since $\mu_i^k(0) = J_i(\theta_k + \delta u_k, \xi_k) \in [J_l, J_u]$ and $J(\theta_k + \delta u_k, \xi_k) = \frac{1}{N} \sum_{i = 1}^{N} J_i(\theta_k + \delta u_k, \xi_k) \in [J_l, J_u]$, we have that $\|\vec{\mu}^k(0) -  J(\theta_k + \delta u_k, \xi_k) \mathbf{1}\| \leq \sqrt{N} (J_u - J_l)$. Plugging this inequality into the bound in \eqref{eqn:Lem3.1_3}, we complete the proof.
\end{proof}

\section{Proof of Theorem~\ref{thm:DZO}}
In the subsequent proof, we need the following lemma by \cite{nesterov2017random}.
\begin{lemma}
	\label{lem:LipschitzGradient}
	(Lipschitz Properties of the Smoothed Function, \cite{nesterov2017random} Given Assumption~\ref{asm:Lipschitz}, the smoothed function $J_\delta(\theta)$ is differentiable and its gradient is Lipschitz, that is, $\|\nabla J_\delta(\theta_1) - \nabla J_\delta(\theta_2) \| \leq \frac{\sqrt{d} L_0}{\delta} \|\theta_1 - \theta_2 \|, \l \text{for all } \theta_1, \theta_2 \in \mathbb{R}^d$. 
\end{lemma}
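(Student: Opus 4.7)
The plan is to prove Lemma~\ref{lem:LipschitzGradient} directly from the Gaussian smoothing representation of $J_\delta$, using only Assumption~\ref{asm:Lipschitz} (Lipschitz continuity of $J$ with constant $L_0$) and standard moment bounds for the standard multivariate normal. I would separate the argument into three steps: (i) derive a convenient integral formula for $\nabla J_\delta(\theta)$ via a Stein-type identity, (ii) differencing two such gradients, apply Lipschitzness of $J$ under the expectation, and (iii) bound the resulting Gaussian moment by $\sqrt{d}$.

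First, I would write the smoothing as $J_\delta(\theta)=\int J(\theta+\delta u)\phi(u)\,du$ with $\phi$ the standard Gaussian density on $\mathbb{R}^d$, and perform the change of variables $v=\theta+\delta u$ to obtain
\begin{equation*}
J_\delta(\theta)=\delta^{-d}\int J(v)\,\phi\!\left(\tfrac{v-\theta}{\delta}\right)\,dv.
\end{equation*}
Differentiating under the integral (justified by the Lipschitz bound $|J(v)|\le |J(0)|+L_0\|v\|$ together with Gaussian tails, via dominated convergence), and using $\nabla_\theta\phi(\tfrac{v-\theta}{\delta})=\phi(\tfrac{v-\theta}{\delta})\cdot\tfrac{v-\theta}{\delta^2}$, reverting the change of variables gives
\begin{equation*}
\nabla J_\delta(\theta)=\frac{1}{\delta}\,\mathbb{E}_u\!\left[J(\theta+\delta u)\,u\right].
\end{equation*}
This already gives differentiability of $J_\delta$, the first half of the claim.

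Next, for two points $\theta_1,\theta_2\in\mathbb{R}^d$, subtracting the integral formulas (and using $\mathbb{E}_u[u]=0$ to eliminate any constant term when needed) yields
\begin{equation*}
\nabla J_\delta(\theta_1)-\nabla J_\delta(\theta_2)=\frac{1}{\delta}\,\mathbb{E}_u\!\left[\bigl(J(\theta_1+\delta u)-J(\theta_2+\delta u)\bigr)u\right].
\end{equation*}
Taking norms, moving the norm inside the expectation by Jensen/triangle inequality, and applying Assumption~\ref{asm:Lipschitz} pointwise to $|J(\theta_1+\delta u)-J(\theta_2+\delta u)|\le L_0\|\theta_1-\theta_2\|$ gives
\begin{equation*}
\|\nabla J_\delta(\theta_1)-\nabla J_\delta(\theta_2)\|\le \frac{L_0}{\delta}\|\theta_1-\theta_2\|\,\mathbb{E}_u[\|u\|].
\end{equation*}

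The final step is the Gaussian moment bound $\mathbb{E}_u[\|u\|]\le\sqrt{\mathbb{E}_u[\|u\|^2]}=\sqrt{d}$ by Jensen's inequality, since the coordinates of $u$ are i.i.d.\ standard normals. Combining gives the advertised constant $\sqrt{d}L_0/\delta$. I expect the main obstacle to be the first step: justifying the interchange of $\nabla_\theta$ and the expectation (equivalently, the integration by parts in the Stein identity) under only a Lipschitz assumption on $J$, which requires a careful dominated-convergence argument using the linear-growth bound on $|J|$ and the finiteness of Gaussian moments. Once that is in place, the remaining estimates are immediate.
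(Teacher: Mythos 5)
Your proof is correct: the representation $\nabla J_\delta(\theta)=\frac{1}{\delta}\mathbb{E}_u[J(\theta+\delta u)\,u]$, the pointwise application of Assumption~\ref{asm:Lipschitz} after differencing, and the bound $\mathbb{E}_u[\|u\|]\le\sqrt{\mathbb{E}_u[\|u\|^2]}=\sqrt{d}$ together yield exactly the stated constant $\sqrt{d}L_0/\delta$, and your dominated-convergence justification of differentiating under the integral (linear growth of $|J|$ against Gaussian tails) is the right way to handle the only delicate step. The paper itself does not prove this lemma but defers to \cite{nesterov2017random}, and your argument is essentially the proof given there, so there is nothing further to reconcile.
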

Next, we present a lemma that bounds the squared norm of the gradient of the smoothed function $\nabla J_\delta (\theta_k)$ at iterate $\theta_k$.

\begin{lemma}
	\label{lem:GradientNorm}
	Let Assumptions~\ref{asm:Lipschitz} and \ref{asm:UnbiasedEval} hold. Then, for all $k \geq 0$, we have that 
	\begin{align}
	\label{eqn:Thm3.2}
	\Exp[ \| \nabla J_\delta (& \theta_k) \|^2 | \mathcal{F}_{k-1} ] \leq  \frac{2}{\alpha} (\Exp[ J_\delta(\theta_{k+1}) - J_\delta(\theta_{k}) | \mathcal{F}_{k-1} ]) \nonumber \\
	& +  2\sqrt{d}L_0 \frac{\alpha}{\delta} \Exp[\|g_\delta(\theta_k)\|^2 | \mathcal{F}_{k-1} ] \nonumber \\
	&  + \frac{d_i}{\delta^2} \Exp[\|\vec{\mu}^k - \bar{\mu}^{k}\mathbf{1} \|^2 \|u_k\|^2 | \mathcal{F}_{k-1} ] \nonumber \\
	& + 4\sqrt{d} d_i L_0 \frac{\alpha}{\delta^3} \Exp[\|\vec{\mu}^k - \bar{\mu}^{k}\mathbf{1} \|^2 \|u_k\|^2 | \mathcal{F}_{k-1} ] \nonumber \\
	& +  4 d^{1.5} d_i L_0 \frac{\alpha}{\delta^3} \Exp[\|\mu^{k-1} - \bar{\mu}^{k-1}\mathbf{1} \|^2 | \mathcal{F}_{k-1} ],
	\end{align}
	where the filtration $\mathcal{F}_{k-1} = \sigma(u_t, \xi_t | t \leq k-1)$, $g_\delta(\theta_k) =\frac{J(\theta_k + \delta u_k, \xi_k) - J(\theta_{k-1} + \delta u_{k-1}, \xi_{k-1})}{\delta} u_k$ and $\bar{\mu}^k = \frac{1}{N} \mathbf{1}^T \vec{\mu}^k = J(\theta_k + \delta u_k, \xi_k)$.
\end{lemma}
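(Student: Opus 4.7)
The plan is a standard descent-lemma argument for the smoothed objective $J_\delta$, with care taken to isolate the consensus-induced bias in the algorithm's stochastic gradient estimator. First I would invoke the $L_\delta$-Lipschitz gradient property of $J_\delta$ from Lemma~\ref{lem:LipschitzGradient} (with $L_\delta=\sqrt{d}L_0/\delta$) to obtain the descent inequality
\[
J_\delta(\theta_{k+1}) \geq J_\delta(\theta_k) + \alpha\,\langle \nabla J_\delta(\theta_k),\hat g_k\rangle - \tfrac{L_\delta\alpha^2}{2}\|\hat g_k\|^2,
\]
where $\hat g_k$ denotes the algorithm's gradient, whose $i$-th block is $\tfrac{1}{\delta}(\mu_i^k(N_c)-\mu_i^{k-1}(N_c))u_{i,k}$. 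Rearranging and taking expectation bounds $\Exp[\langle\nabla J_\delta,\hat g_k\rangle]$ by $\tfrac{1}{\alpha}\Exp[J_\delta(\theta_{k+1})-J_\delta(\theta_k)]+\tfrac{L_\delta\alpha}{2}\Exp[\|\hat g_k\|^2]$, which already supplies the first two terms of the target bound once $\tfrac12\|\nabla J_\delta\|^2$ is absorbed back to the left.

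Next I would decompose $\hat g_k = g_\delta(\theta_k)+\Delta_k$, where $g_\delta(\theta_k)$ is the idealized residual-feedback estimator that uses the exact global mean $\bar\mu^k = J(\theta_k+\delta u_k,\xi_k)$ and $\Delta_k$ is the bias. The blocks satisfy $\Delta_{k,i}=\tfrac{1}{\delta}\bigl[(\mu_i^k(N_c)-\bar\mu^k)-(\mu_i^{k-1}(N_c)-\bar\mu^{k-1})\bigr]u_{i,k}$, so $\Delta_k$ depends only on the consensus errors at episodes $k$ and $k-1$ and the current exploration direction. By unbiasedness of $g_\delta$ (inherited from \cite{Zhang20ICML_Residual} and stated in the preliminaries), $\Exp[\langle\nabla J_\delta,g_\delta\rangle]=\|\nabla J_\delta(\theta_k)\|^2$, while the cross term $\Exp[\langle\nabla J_\delta,\Delta_k\rangle]$ is controlled by Young's inequality $-\langle a,b\rangle\leq \tfrac12\|a\|^2+\tfrac12\|b\|^2$ so that exactly $\tfrac12\|\nabla J_\delta(\theta_k)\|^2$ can be absorbed to the left and $\tfrac12\Exp[\|\Delta_k\|^2]$ is added on the right. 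For the quadratic piece I would use $\|\hat g_k\|^2\leq 2\|g_\delta\|^2+2\|\Delta_k\|^2$, which produces the coefficient $2\sqrt{d}L_0\alpha/\delta$ on $\Exp[\|g_\delta\|^2]$ that appears in the bound.

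It remains to expand $\Exp[\|\Delta_k\|^2]$. Applying $(a-b)^2\leq 2a^2+2b^2$ block-by-block splits it into a current-episode sum $\sum_i(\mu_i^k-\bar\mu^k)^2\|u_{i,k}\|^2/\delta^2$ and a previous-episode sum $\sum_i(\mu_i^{k-1}-\bar\mu^{k-1})^2\|u_{i,k}\|^2/\delta^2$. The first is bounded by $\|\mu^k-\bar\mu^k\mathbf 1\|^2\|u_k\|^2/\delta^2$ after a max-type inequality over the blocks. For the second, since $u_k$ is sampled freshly at episode $k$ and is independent of the history generating $\mu^{k-1}$, integrating out $u_{i,k}$ first replaces $\|u_{i,k}\|^2$ by its mean $d_i$, which is how the $d_i$ factor (together with the extra $\sqrt{d}$ from $L_\delta$) ends up in front of $\Exp[\|\mu^{k-1}-\bar\mu^{k-1}\mathbf 1\|^2]$. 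The main obstacle is the constant bookkeeping: the Young weights must be tuned so that exactly $\tfrac12\|\nabla J_\delta\|^2$ can be absorbed, the $L_\delta\alpha$ factor must be propagated through the $\|\Delta_k\|^2$ bound to yield the $\alpha/\delta^3$ scaling on the consensus-error terms, and the independence argument used on the previous-episode term must be stated carefully to avoid double-counting randomness. Conceptually, however, the argument is a direct descent / Young / variance-decomposition chain, whose key structural feature is that the algorithm's bias telescopes as a difference of two consecutive consensus errors.
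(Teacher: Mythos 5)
Your overall strategy---the descent inequality for $J_\delta$ with constant $L_{1,\delta}=\sqrt{d}L_0/\delta$, the decomposition of the algorithmic update into the ideal residual-feedback estimator $g_\delta(\theta_k)$ plus a bias $\Delta_k$ built from the consensus errors at episodes $k$ and $k-1$, and the split $\|\hat g_k\|^2\le 2\|g_\delta(\theta_k)\|^2+2\|\Delta_k\|^2$ for the quadratic term---is the same as the paper's. However, your treatment of the cross term $\Exp[\langle\nabla J_\delta(\theta_k),\Delta_k\rangle]$ does not give the bound as stated. You propose to bound the \emph{entire} cross term by Young's inequality, which adds $\tfrac12\Exp[\|\Delta_k\|^2]$ to the right-hand side with no factor of $\alpha$. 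Since $\|\Delta_k\|^2\le \tfrac{2}{\delta^2}\sum_i(\mu_i^k-\bar\mu^k)^2\|u_{i,k}\|^2+\tfrac{2}{\delta^2}\sum_i(\mu_i^{k-1}-\bar\mu^{k-1})^2\|u_{i,k}\|^2$, and the second sum has expectation $\tfrac{2d_i}{\delta^2}\Exp[\|\mu^{k-1}-\bar\mu^{k-1}\mathbf{1}\|^2]$, your final inequality carries an extra term of order $\delta^{-2}\Exp[\|\mu^{k-1}-\bar\mu^{k-1}\mathbf{1}\|^2]$ that is absent from \eqref{eqn:Thm3.2}: in the lemma the stale consensus error appears only with the coefficient $4d^{1.5}d_iL_0\,\alpha/\delta^3$, i.e., scaled by $\alpha$. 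With the stepsize used later ($\alpha\propto 1/\sqrt K$) these two coefficients are not of the same order, so your argument proves a strictly weaker inequality and does not establish Lemma~\ref{lem:GradientNorm} as written (it would still support a convergence statement of the same flavor after re-tuning $E_\mu$, but that is a different lemma).

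The missing idea is that the stale part of the bias requires no Young's inequality at all: conditioned on $\mathcal{F}_{k-1}=\sigma(u_t,\xi_t \mid t\le k-1)$, both $\nabla J_\delta(\theta_k)$ and $\diag(\mu^{k-1}-\bar\mu^{k-1}\mathbf{1})$ are fixed, while $u_k$ is independent of $\mathcal{F}_{k-1}$ with $\Exp[u_k]=0$, so $\Exp\big[\langle\nabla J_\delta(\theta_k),\tfrac{1}{\delta}\diag(\mu^{k-1}-\bar\mu^{k-1}\mathbf{1})\otimes I_{d_i}u_k\rangle\big]=0$ exactly. Only the current-episode piece of the bias, which does depend on $u_k$ and $\xi_k$, must be handled by Young's inequality, and this is what produces the $\tfrac{d_i}{\delta^2}\Exp[\|\mu^k-\bar\mu^k\mathbf{1}\|^2\|u_k\|^2]$ term in \eqref{eqn:Thm3.2}; the previous-episode error then surfaces only through the $\tfrac{L_{1,\delta}}{2}\alpha^2\|\theta_{k+1}-\theta_k\|^2$ term, which is where its $\alpha/\delta^3$ coefficient comes from. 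The remaining ingredients of your outline---unbiasedness of $g_\delta(\theta_k)$ conditioned on the filtration, independence of $u_k$ used to replace $\|u_{i,k}\|^2$ by its mean in the stale term, and the $(a-b)^2\le 2a^2+2b^2$ splits---coincide with the paper's proof.
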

\begin{proof}
	According to Assumption~\ref{asm:Lipschitz} and Lemma~\ref{lem:LipschitzGradient}, we have that the smoothed function $f_\delta(\theta)$ has Lipschitz gradient with the Lipschitz constant $L_{1, \delta} = \frac{\sqrt{d}L_0}{\delta}$. Therefore, using the inequality (6) in \cite{nesterov2017random}, we obtain that
	\begin{align}
		\label{eqn:Thm3.2_1}
		\langle \nabla J_\delta (\theta_k) , \theta_{k+1} - \theta_{k} \rangle \leq & \; J_\delta(\theta_{k+1}) - J_\delta(\theta_{k}) + \nonumber \\
		& \frac{L_{1, \delta}}{2} \|\theta_{k+1} - \theta_k\|^2.
	\end{align}
	Without loss of generality, we assume that each agent's local policy function $\pi_i$ is parameterized with $\theta_i \in \mathbb{R}^{d_i}$ and $d_i = \frac{d}{N}$ for all $i$. 
	Then, the update~\eqref{eqn:Alg_Residual} can be written in the compact form
	\begin{align}
		\label{eqn:Thm3.2_2}
		\theta_{k+1} &\; = \theta_{k} + \frac{\alpha}{\delta} \diag\big([\mu_1^k(N_c) - \mu_1^{k-1}(N_c), \nonumber \\
		& \quad \quad \quad \quad \quad \dots, \mu_N^k(N_c) - \mu_N^{k-1}(N_c)]\big) \otimes I_{d_i} u_k \nonumber \\
		& = \theta_k + \frac{\alpha}{\delta} \diag\big( \vec{\mu}^k - \vec{\mu}^{k-1}\big) \otimes I_{d_i} u_k
	\end{align}
	where $\otimes$ represents the kronecker product and $I_{d_i}$ is an identity matrix with dimension $d_i$. To simplify the notation, we use $\mu_i^k$ or $\vec{\mu}^k$ to denote $\mu_i^k(N_c)$ and $\vec{\mu}^k(N_c)$, respectively. Then, we equivalently rewrite equality~\eqref{eqn:Thm3.2_2} as
	\begin{align}
		& \theta_{k+1} = \; \theta_k + \frac{\alpha}{\delta} \diag\big( \vec{\mu}^k  - \bar{\mu}^k \mathbf{1} + J(\theta_k + \delta u_k, \xi_k) \mathbf{1} \nonumber \\ 
		& -\vec{\mu}^{k-1}  + \bar{\mu}^{k-1} \mathbf{1} - J(\theta_{k-1} + \delta u_{k-1}, \xi_{k-1}) \mathbf{1} ) \big) \otimes I_{d_i} u_k, \nonumber
	\end{align}
	because  $\bar{\mu}^k = \frac{1}{N} \mathbf{1}^T \vec{\mu}^k = J(\theta_k + \delta u_k, \xi_k)$ as in the proof of Lemma~\ref{lem:ConsensusErr_1}. Rearranging terms in above equality, we have that
	\begin{align}
	\label{eqn:Thm3.2_3}
		\theta_{k+1} - \theta_{k} = & \; \alpha \frac{J(\theta_k + \delta u_k, \xi_k) - J(\theta_{k-1} + \delta u_{k-1}, \xi_{k-1})}{\delta} u_k  \nonumber \\
		& + \frac{\alpha}{\delta} \diag(\vec{\mu}^k - \bar{\mu}^{k} \mathbf{1}) \otimes I_{d_i} u_k \nonumber \\
		&  - \frac{\alpha}{\delta} \diag(\vec{\mu}^{k-1} - \bar{\mu}^{k-1} \mathbf{1}) \otimes I_{d_i} u_k,
	\end{align}	
	Substituting \eqref{eqn:Thm3.2_3} into the bound in \eqref{eqn:Thm3.2_1} and rearranging terms, we get that
	\begin{align}
		\label{eqn:Thm3.2_4}
		& \alpha \langle \nabla J_\delta (\theta_k) , g_\delta(\theta_k) \rangle \nonumber \\
		& \leq J_\delta(\theta_{k+1}) - J_\delta(\theta_{k}) - \frac{\alpha}{\delta}  \langle \nabla J_\delta (\theta_k), \diag(\vec{\mu}^k - \bar{\mu}^{k} \mathbf{1}) \otimes I_{d_i} u_k \rangle  \nonumber \\
		& + \frac{L_{1, \delta}}{2} \alpha^2 \|g_\delta(\theta_k) + \frac{1}{\delta} \diag(\vec{\mu}^k - \bar{\mu}^{k} \mathbf{1}) \otimes I_{d_i} u_k \nonumber \\
		& - \frac{1}{\delta} \diag(\mu^{k-1} - \bar{\mu}^{k-1} \mathbf{1}) \otimes I_{d_i} u_k\|^2 \nonumber \\
		&  + \frac{\alpha}{\delta}  \langle \nabla J_\delta (\theta_k), \diag(\mu^{k-1} - \bar{\mu}^{k-1} \mathbf{1}) \otimes I_{d_i} u_k \rangle,
	\end{align}
	where $g_\delta(\theta_k) =\frac{J(\theta_k + \delta u_k, \xi_k) - J(\theta_{k-1} + \delta u_{k-1}, \xi_{k-1})}{\delta} u_k$. Dividing both sides of~\eqref{eqn:Thm3.2_4} by $\alpha$ and taking the expectation of both sides with respect to $u_k$ and $\xi_k$ conditioned on the filtration $\mathcal{F}_{k-1} = \sigma( u_t, \xi_t | t \leq k-1)$, we have that 
	\begin{align}
	\label{eqn:Thm3.2_5}
	& \Exp[ \| \nabla J_\delta (\theta_k) \|^2 | \mathcal{F}_{k-1}]  \leq \frac{\Exp[ J_\delta(\theta_{k+1}) - J_\delta(\theta_{k}) | \mathcal{F}_{k-1} ]}{\alpha} \nonumber \\
	& - \frac{1}{\delta}  \Exp[ \langle \nabla J_\delta (\theta_k), \diag(\vec{\mu}^k - \bar{\mu}^{k} \mathbf{1}) \otimes I_{d_i} u_k \rangle | \mathcal{F}_{k-1} ]  & \nonumber \\
	& + \frac{L_{1, \delta}}{2} \alpha \Exp[ \|g_\delta(\theta_k) + \frac{1}{\delta} \diag(\vec{\mu}^k - \bar{\mu}^{k} \mathbf{1}) \otimes I_{d_i} u_k \nonumber \\
	& - \frac{1}{\delta} \diag(\mu^{k-1} - \bar{\mu}^{k-1} \mathbf{1}) \otimes I_{d_i} u_k\|^2 | \mathcal{F}_{k-1} ].
	\end{align}
	Note that all the expectation in the rest of this proof shall be conditional on the filtration $\mathcal{F}_{k-1}$. To simplify notation, in what follows we omit conditioning on the filtration.
	This is because $\Exp[g_\delta(\theta_k)] = \nabla J_\delta (\theta_k)$, $\nabla J_\delta (\theta_k)$ and $\diag(\mu^{k-1} - \bar{\mu}^{k-1} \mathbf{1})$ are fixed when conditioned on the filtration $\mathcal{F}_{k-1}$, and $\Exp[u_k] = 0$. Next, we provide bounds on the second and third terms in the right hand side (RHS) of \eqref{eqn:Thm3.2_5}. Specifically, because $- \langle \nabla J_\delta (\theta_k), \frac{1}{\delta} \diag(\vec{\mu}^k - \bar{\mu}^{k} \mathbf{1}) \otimes I_{d_i} u_k \rangle \leq \frac{1}{2} \| \nabla J_\delta (\theta_k)\|^2 + \frac{1}{2\delta^2} \|\diag(\vec{\mu}^k - \bar{\mu}^{k} \mathbf{1}) \otimes I_{d_i} u_k\|^2$, we have that
	\begin{align}
		\label{eqn:Thm3.2_6}
		& - \frac{1}{\delta}  \Exp[ \langle \nabla J_\delta (\theta_k), \diag(\vec{\mu}^k - \bar{\mu}^{k} \mathbf{1}) \otimes I_{d_i} u_k \rangle] \nonumber \\
		& \leq \frac{1}{2} \Exp[\| \nabla J_\delta (\theta_k)\|^2] + \frac{1}{2\delta^2} \Exp[ \|\diag(\vec{\mu}^k - \bar{\mu}^{k} \mathbf{1}) \otimes I_{d_i} u_k\|^2] \nonumber \\
		& \leq \frac{1}{2} \Exp[\| \nabla J_\delta (\theta_k)\|^2]  + \frac{d_i}{2\delta^2} \Exp[ \|\vec{\mu}^k - \bar{\mu}^k \mathbf{1} \|^2 \|u_k\|^2 ],
	\end{align}
	where the third inequality is due to the fact that $\|\diag(v_1) v_2\|^2 \leq \|v_1\|^2 \|v_2\|^2$ for all $v_1, v_2 \in \mathbb{R}^d$. Furthermore, we have that
	\begin{align}
		\label{eqn:Thm3.2_7}
		&\Exp[ \|g_\delta(\theta_k) + \frac{1}{\delta} \diag(\vec{\mu}^k - \bar{\mu}^{k} \mathbf{1}) \otimes I_{d_i} u_k \nonumber \\
		& \quad - \frac{1}{\delta} \diag(\mu^{k-1} - \bar{\mu}^{k-1} \mathbf{1}) \otimes I_{d_i} u_k\|^2] \nonumber \\
		& \leq 2 \Exp[ \|g_\delta(\theta_k) \|^2] + \frac{2}{\delta^2} \Exp[ \| \diag(\vec{\mu}^k - \bar{\mu}^{k} \mathbf{1}) \otimes I_{d_i} u_k \nonumber \\
		& \quad - \diag(\mu^{k-1} - \bar{\mu}^{k-1} \mathbf{1}) \otimes I_{d_i} u_k \|^2 ] \nonumber \\
		& \leq 2 \Exp[ \|g_\delta(\theta_k) \|^2] + \frac{4}{\delta^2} \Exp[ \| \diag(\vec{\mu}^k - \bar{\mu}^{k} \mathbf{1}) \otimes I_{d_i} u_k\|^2] \nonumber \\
		& \quad  + \frac{4}{\delta^2} \Exp[ \| \diag(\mu^{k-1} - \bar{\mu}^{k-1} \mathbf{1}) \otimes I_{d_i} u_k \|^2] \nonumber \\
		& \leq 2 \Exp[ \|g_\delta(\theta_k) \|^2] + \frac{4d_i}{\delta^2} \Exp[\|\vec{\mu}^k - \bar{\mu}^{k}\mathbf{1} \|^2 \|u_k\|^2] \nonumber \\
		& \quad + \frac{4d d_i}{\delta^2} \Exp[ \| \mu^{k-1} - \bar{\mu}^{k-1} \mathbf{1} \|^2],
	\end{align}
	where the first two inequalities are due to the fact that $\|v_1 + v_2\|^2 \leq 2\|v_1\|^2 + 2\|v_2\|^2$ and the third inequality is due to the fact that $\Exp[ \| \diag(\mu^{k-1} - \bar{\mu}^{k-1} \mathbf{1}) \otimes I_{d_i} u_k \|^2] \leq d_i \Exp[ \| \diag(\mu^{k-1} - \bar{\mu}^{k-1} \mathbf{1}) \|^2 \| u_k \|^2]$, the fact that $\mu^{k-1} - \bar{\mu}^{k-1} \mathbf{1}$ is independent of $u_k$ and the fact that $\Exp[\|u_k\|^2] = d$. Substituting the bounds in \eqref{eqn:Thm3.2_6} and \eqref{eqn:Thm3.2_7} into the bound in \eqref{eqn:Thm3.2_5} and rearranging the terms, we get that
	\begin{align}
		& \Exp[ \| \nabla J_\delta (\theta_k) \|^2] \nonumber \\
		& \leq \frac{2}{\alpha} (\Exp[ J_\delta(\theta_{k+1}) - J_\delta(\theta_{k})]) + 2\sqrt{d}L_0 \frac{\alpha}{\delta} \Exp[\|g_\delta(\theta_k)\|^2] \nonumber \\
		& + 4\sqrt{d} d_i L_0 \frac{\alpha}{\delta^3} \Exp[\|\vec{\mu}^k - \bar{\mu}^{k}\mathbf{1} \|^2 \|u_k\|^2] \nonumber \\
		& + 4 d^{1.5} d_i L_0 \frac{\alpha}{\delta^3} \Exp[\|\mu^{k-1} - \bar{\mu}^{k-1}\mathbf{1} \|^2] \nonumber \\
		& + \frac{d_i}{\delta^2} \Exp[\|\vec{\mu}^k - \bar{\mu}^{k}\mathbf{1} \|^2 \|u_k\|^2].
	\end{align}
	The proof is complete.
\end{proof}
Next, we present a lemma bounding the second moment of the gradient estimate $g_\delta(\theta_k)$.
\begin{lemma}
	\label{lem:SecondMoment}
	Let Assumptions~\ref{asm:Lipschitz} and \ref{asm:UnbiasedEval} hold. Then, for all $k \geq 1$, we have that
	\begin{align}
\label{eqn:LemB.3}
& \Exp[ \|g_\delta(\theta_{k})\|^2 ] \leq 8dL_0^2 \frac{\alpha^2}{\delta^2} \Exp[ \|g_\delta(\theta_{k-1})\|^2 ]\nonumber \\
&  + 16dd_i L_0^2 \frac{\alpha^2}{\delta^4} \Exp[ \|\mu^{k-1} - \bar{\mu}^{k-1} \mathbf{1} \|^2 \|u_{k-1}\|^2 ]  \\
& + 16d^2 d_i L_0^2 \frac{\alpha^2}{\delta^4} \Exp[ \|\mu^{k-2} - \bar{\mu}^{k-2} \mathbf{1} \|^2] + 16(d+4)^2 L_0^2 + \frac{8d\sigma^2}{\delta^2}. \nonumber 
\end{align}	
\end{lemma}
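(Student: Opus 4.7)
The plan is to bound $\|g_\delta(\theta_k)\|^2 = \frac{\|u_k\|^2}{\delta^2}(\Delta J)^2$, where $\Delta J := J(\theta_k+\delta u_k,\xi_k)-J(\theta_{k-1}+\delta u_{k-1},\xi_{k-1})$, by peeling off the evaluation noise from the deterministic piece and then plugging in the update rule for $\theta_k-\theta_{k-1}$ already written in the proof of Lemma~\ref{lem:GradientNorm}. Specifically, I would insert the noiseless values $J(\theta_k+\delta u_k)$ and $J(\theta_{k-1}+\delta u_{k-1})$ to write $\Delta J = B + A - C$, with
\begin{align*}
B &:= J(\theta_k+\delta u_k)-J(\theta_{k-1}+\delta u_{k-1}),\\
A &:= J(\theta_k+\delta u_k,\xi_k)-J(\theta_k+\delta u_k),\\
C &:= J(\theta_{k-1}+\delta u_{k-1},\xi_{k-1})-J(\theta_{k-1}+\delta u_{k-1}).
\end{align*}
Applying $(B+A-C)^2 \le 2B^2+4A^2+4C^2$, multiplying by $\|u_k\|^2/\delta^2$, taking conditional expectation given $\mathcal{F}_{k-1}$, and using independence of $u_k$ from $(\xi_k,\xi_{k-1})$ together with Assumption~\ref{asm:UnbiasedEval} produces $\mathbb{E}[\|u_k\|^2(4A^2+4C^2)/\delta^2]\le 8d\sigma^2/\delta^2$, which accounts for the last term of~\eqref{eqn:LemB.3}.

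For the signal piece $B$, Assumption~\ref{asm:Lipschitz} yields $|B|\le L_0\|\theta_k-\theta_{k-1}\|+\delta L_0\|u_k-u_{k-1}\|$, so $B^2\le 2L_0^2\|\theta_k-\theta_{k-1}\|^2 + 2L_0^2\delta^2\|u_k-u_{k-1}\|^2$. The exploration residual contributes $4L_0^2\,\mathbb{E}[\|u_k\|^2\|u_k-u_{k-1}\|^2]$; using $\|u_k-u_{k-1}\|^2 \le 2\|u_k\|^2+2\|u_{k-1}\|^2$ together with the Nesterov moment bound $\mathbb{E}[\|u\|^4]\le(d+4)^2$ and independence of $u_k$ from $u_{k-1}$ upper bounds it by $16L_0^2(d+4)^2$, matching the fourth summand.

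The remaining piece $\mathbb{E}[\tfrac{4L_0^2\|u_k\|^2}{\delta^2}\|\theta_k-\theta_{k-1}\|^2]$ is where the update rule enters. Since $u_k$ is independent of $\mathcal{F}_{k-1}$, this equals $\tfrac{4dL_0^2}{\delta^2}\mathbb{E}[\|\theta_k-\theta_{k-1}\|^2]$. Substituting the compact update used inside the proof of Lemma~\ref{lem:GradientNorm},
\[
\theta_k-\theta_{k-1} = \alpha\, g_\delta(\theta_{k-1}) + \tfrac{\alpha}{\delta}\diag(\mu^{k-1}-\bar\mu^{k-1}\mathbf{1})\otimes I_{d_i}\, u_{k-1} - \tfrac{\alpha}{\delta}\diag(\mu^{k-2}-\bar\mu^{k-2}\mathbf{1})\otimes I_{d_i}\, u_{k-1},
\]
together with $\|a+b-c\|^2 \le 2\|a\|^2 + 4\|b\|^2 + 4\|c\|^2$ and the bound $\|\diag(v)\otimes I_{d_i}\, u\|^2 \le d_i\|v\|^2\|u\|^2$ used in Lemma~\ref{lem:GradientNorm}, recovers the first three terms of~\eqref{eqn:LemB.3}. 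The extra factor of $d$ in the $\mu^{k-2}$ coefficient enters because $u_{k-1}$ is independent of $\mu^{k-2}-\bar\mu^{k-2}\mathbf{1}$, so $\mathbb{E}[\|u_{k-1}\|^2]=d$ factors out; in the $\mu^{k-1}$ term, $u_{k-1}$ and $\mu^{k-1}$ are correlated through the consensus step, so $\|u_{k-1}\|^2$ must remain inside the expectation.

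The main technical obstacle is purely bookkeeping: squaring a four-piece sum (policy drift, exploration drift, and two independent evaluation noises), keeping straight which terms are $\mathcal{F}_{k-1}$-measurable so that the exploration direction $u_k$ can be pulled out, and picking Young's-inequality constants $2$ and $4$ in two distinct places so that the numerical coefficients in~\eqref{eqn:LemB.3} come out exactly. Once this accounting is done, the inequality is assembled by direct substitution.
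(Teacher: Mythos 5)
Your proposal is correct and follows essentially the same route as the paper: bound $|J(\theta_k+\delta u_k,\xi_k)-J(\theta_{k-1}+\delta u_{k-1},\xi_{k-1})|^2$ via intermediate points, Lipschitzness, and the variance bound, pull out $\Exp[\|u_k\|^2]=d$ and $\Exp[\|u\|^4]\leq(d+4)^2$, then substitute the update \eqref{eqn:Thm3.2_3} and the bound \eqref{eqn:Thm3.2_7} for $\|\theta_k-\theta_{k-1}\|^2$, with the independence of $u_{k-1}$ from $\mu^{k-2}$ (but not $\mu^{k-1}$) producing the extra factor $d$ exactly as in the paper. The only difference is cosmetic: you split off the evaluation noises $A,C$ before applying Lipschitzness to the noiseless $J$ (arguably cleaner given Assumption~\ref{asm:Lipschitz} is stated for the noiseless value), whereas the paper telescopes over $\theta$, $u$, $\xi$ directly on the noisy evaluations; both yield identical constants.
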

\begin{proof}
	Recalling that $g_\delta(\theta_k) =\frac{J(\theta_k + \delta u_k, \xi_k) - J(\theta_{k-1} + \delta u_{k-1}, \xi_{k-1})}{\delta} u_k$, we have that
	\begin{align}
		\label{eqn:LemB.3_1}
		& \Exp[ \|g_\delta(\theta_{k})\|^2 ] \\
		& = \frac{1}{\delta^2}\Exp[ | J(\theta_k + \delta u_k, \xi_k) - J(\theta_{k-1} + \delta u_{k-1}, \xi_{k-1})|^2 \|u_k\|^2 ]. \nonumber
	\end{align}
	In addition, using the inequality $(a + b)^2 \leq 2a^2 + 2b^2$, we have that
	\begin{align}
		& \quad | J(\theta_k + \delta u_k, \xi_k) - J(\theta_{k-1} + \delta u_{k-1}, \xi_{k-1})|^2 \nonumber \\
		& = |J(\theta_k + \delta u_k, \xi_k)  -  J(\theta_{k-1} + \delta u_{k-1}, \xi_k)  \nonumber \\
		& \quad +  J(\theta_{k-1} + \delta u_{k-1}, \xi_k) - J(\theta_{k-1} + \delta u_{k-1}, \xi_{k-1}) | \nonumber \\
		& \leq 2(J(\theta_k + \delta u_k, \xi_k)  -  J(\theta_{k-1} + \delta u_{k-1}, \xi_k))^2 \nonumber \\
		& \quad + 2(J(\theta_{k-1} + \delta u_{k-1}, \xi_k) - J(\theta_{k-1} + \delta u_{k-1}, \xi_{k-1}))^2. \nonumber
	\end{align}
	Then, by adding and subtracting the term $J(\theta_{k-1} + \delta u_{k}, \xi_{k})$ within the term $(J(\theta_k + \delta u_k, \xi_k)  -  J(\theta_{k-1} + \delta u_{k-1}, \xi_k))^2$ in above inequality and applying the bound $(a + b)^2 \leq 2a^2 + 2b^2$ similarly as above, we get that
	\begin{align}
		\label{eqn:LemB.3_2}
		& | J(\theta_k + \delta u_k, \xi_k) - J(\theta_{k-1} + \delta u_{k-1}, \xi_{k-1})|^2 \nonumber \\
		& \leq 4\big( J(\theta_k + \delta u_k, \xi_k) - J(\theta_{k-1} + \delta u_{k}, \xi_{k}) \big)^2 \\
		& + 4 \big( J(\theta_{k-1} + \delta u_k, \xi_k) - J(\theta_{k-1} + \delta u_{k-1}, \xi_{k}) \big)^2 \nonumber \\
		& + 2  \big( J(\theta_{k-1} + \delta u_{k-1}, \xi_k) - J(\theta_{k-1} + \delta u_{k-1}, \xi_{k-1})\big)^2. \nonumber
	\end{align}
	According to Assumption~\ref{asm:Lipschitz}, we have that $\big( J(\theta_k + \delta u_k, \xi_k) - J(\theta_{k-1} + \delta u_{k}, \xi_{k}) \big)^2 \leq L_0^2 \|\theta_k - \theta_{k-1}\|^2$ and $\big( J(\theta_{k-1} + \delta u_k, \xi_k) - J(\theta_{k-1} + \delta u_{k-1}, \xi_{k}) \big)^2 \leq L_0^2 \delta^2 \|u_k - u_{k-1}\|^2$. Furthermore, according to Assumption~\ref{asm:UnbiasedEval}, we have that $\big( J(\theta_{k-1} + \delta u_{k-1}, \xi_k) - J(\theta_{k-1} + \delta u_{k-1}, \xi_{k-1})\big)^2 \leq 4\sigma^2$. Applying the above bounds to the RHS of \eqref{eqn:LemB.3_2}, we get that
	\begin{align}
	\label{eqn:LemB.3_3}
	& | J(\theta_k + \delta u_k, \xi_k) - J(\theta_{k-1} + \delta u_{k-1}, \xi_{k-1})|^2 \nonumber \\
	&  \leq 4L_0^2 \|\theta_k - \theta_{k-1}\|^2 + 4 L_0^2 \delta^2 \|u_k - u_{k-1}\|^2 + 8\sigma^2.
	\end{align}	
	Substituting the bound \eqref{eqn:LemB.3_3} into \eqref{eqn:LemB.3_1}, we have that
	\begin{align}
		\label{eqn:LemB.3_4}
		\Exp[ \|g_\delta(\theta_{k})\|^2 ] & \leq \frac{4L_0^2}{\delta^2} \Exp[ \|\theta_k - \theta_{k-1}\|^2 \|u_k\|^2 ] \nonumber \\
		&  + 4L_0^2 \Exp[  \|u_k - u_{k-1}\|^2 \|u_k\|^2 ] + \frac{8\sigma^2}{\delta^2} \Exp[\|u_k\|^2] \nonumber \\
		\leq \frac{4 d L_0^2}{\delta^2}  \Exp[ \|\theta_k  & - \theta_{k-1}\|^2 ] + 16 (d+4)^2 L_0^2 + \frac{8d\sigma^2}{\delta^2}.
	\end{align}
	The second inequality is due to the fact that $\|\theta_k - \theta_{k-1}\|^2$ is independent of $u_k$, $\Exp[  \|u_k - u_{k-1}\|^2 \|u_k\|^2 ] \leq 4(d+4)^2$ and $\Exp[\|u_k\|^2] = d$. Specifically, we have that $\Exp[  \|u_k - u_{k-1}\|^2 \|u_k\|^2 ] \leq 4(d+4)^2$ because $\|u_k - u_{k-1}\|^2 \leq 2\|u_k\|^2 + 2 \|u_{k-1}\|^2$  and that $\Exp[ \|u_k\|^4] \leq (d+4)^2$ according to \cite{nesterov2017random}. Substituting the expression for $\theta_k - \theta_{k-1}$ in \eqref{eqn:Thm3.2_3} into \eqref{eqn:LemB.3_4} and applying the bound in \eqref{eqn:Thm3.2_7}, we obtain that
	\begin{align}
		& \Exp[ \|g_\delta(\theta_{k})\|^2 ] \leq 8dL_0^2 \frac{\alpha^2}{\delta^2} \Exp[ \|g_\delta(\theta_{k-1})\|^2 ] \nonumber \\
		& + 16dd_i L_0^2 \frac{\alpha^2}{\delta^4} \Exp[ \|\mu^{k-1} - \bar{\mu}^{k-1} \mathbf{1} \|^2 \|u_{k-1}\|^2 ] \nonumber \\
		& + 16d^2 d_i L_0^2 \frac{\alpha^2}{\delta^4} \Exp[ \|\mu^{k-2} - \bar{\mu}^{k-2} \mathbf{1} \|^2] + 16(d+4)^2 L_0^2 + \frac{8d\sigma^2}{\delta^2}. \nonumber
	\end{align}
	The proof is complete.
\end{proof}
Now, we are ready to present the proof for Theorem~\ref{thm:DZO}.

{\bf Theorem 3.2.} 	{\bf (Learning Rate of Algorithm~\ref{alg:DZO} without Value Tracking)} Let Assumptions~\ref{asm:Lipschitz}, \ref{asm:UnbiasedEval}, \ref{asm:Graph} and \ref{asm:BoundVal} hold and define $\delta = \frac{\epsilon_J}{\sqrt{d}L_0}$, $  \alpha = \frac{\epsilon_J^{1.5}} {4d^{1.5}L_0^2\sqrt{K}}$, and $N_c \geq \log( \frac{\sqrt{\epsilon} \epsilon_J} { \sqrt{2} d^{1.5} L_0 (J_u - J_l))} ) / \log(\rho_W)$. Then, running Algorithm~\ref{alg:DZO} with $\mathtt{DoTracking} = \mathtt{False}$, we have that $\frac{1}{K} \sum_{k = 0}^{K-1} \mathbb{E}[ \| \nabla J_\delta(\theta_k) \|^2 ] \leq \mathcal{O}( d^{1.5} \epsilon_J^{-1.5} K^{-0.5} ) + \frac{\epsilon}{2}$.

\begin{proof}
	According to Lemma~\ref{lem:ConsensusErr_1}, and using Assumptions~\ref{asm:Graph} and \ref{asm:BoundVal}, we select $N_c \geq \log( \frac{\sqrt{\epsilon} \epsilon_J} { \sqrt{2} d^{1.5} L_0 (J_u - J_l))} ) / \log(\rho_W)$ so that $\|\vec{\mu}^k - \bar{\mu}^k \mathbf{1}\| = \| \vec{\mu}^k - J(\theta_k + \delta u_k, \xi_k) \mathbf{1}\| \leq E_\mu$ regardless of $u_k$ for all $k \geq 0$, where $E_\mu$ is a small constant such that $E_\mu^2 = \frac{\epsilon \delta^2}{2 dd_i}$.
	Therefore, the bound in \eqref{eqn:Thm3.2} can be simplified as
	\begin{align}
		& \Exp[ \| \nabla J_\delta (\theta_k) \|^2 | \mathcal{F}_{k-1}] \leq \frac{2}{\alpha} (\Exp[ J_\delta(\theta_{k+1}) - J_\delta(\theta_{k}) | \mathcal{F}_{k-1}]) \\
		& + 2\sqrt{d}L_0 \frac{\alpha}{\delta} \Exp[\|g_\delta(\theta_k)\|^2 | \mathcal{F}_{k-1}]  +  8 d^{1.5} d_i L_0 \frac{\alpha}{\delta^3} E_\mu^2 + \frac{d d_i}{\delta^2} E_\mu^2. \nonumber 
	\end{align}
	Applying the tower rule of the conditional expectation and telescoping the above inequality from $k = 0$ to $K-1$, we get that
	\begin{align}
	\label{eqn:Thm3.2_8}
	& \sum_{k=0}^{K-1}\Exp[ \| \nabla J_\delta (\theta_k) \|^2] \leq \frac{2}{\alpha} (\Exp[ J_\delta(\theta_{K}) - J_\delta(\theta_{0})])  \\
	&  + 2\sqrt{d}L_0 \frac{\alpha}{\delta} \sum_{k=0}^{K-1} \Exp[\|g_\delta(\theta_k)\|^2]  +  8 d^{1.5} d_i L_0 \frac{\alpha}{\delta^3} E_\mu^2 K  + \frac{d d_i}{\delta^2} E_\mu^2 K, \nonumber
	\end{align}	
	where the expectation is taken over the trajectory of random samples of $u_k$ and $\xi_k$.
	Next, we bound the term $\sum_{k=0}^{K-1} \Exp[\|g_\delta(\theta_k)\|^2]$ on the RHS of \eqref{eqn:Thm3.2_8}. Specifically, since $N_c$ is selected so that $\|\vec{\mu}^k - \bar{\mu}^k \mathbf{1}\| = \| \vec{\mu}^k - J(\theta_k + \delta u_k, \xi_k) \mathbf{1}\| \leq E_\mu$ regardless of $u_k$ for all $k \geq 0$, the bound in \eqref{eqn:LemB.3} can be simplified as
	\begin{align}
	\Exp[ \|g_\delta(\theta_{k})\|^2 | \mathcal{F}_{k-1} ] \leq & \; 8dL_0^2 \frac{\alpha^2}{\delta^2} \Exp[ \|g_\delta(\theta_{k-1})\|^2  | \mathcal{F}_{k-1} ] \nonumber \\
	+ 32d^2d_i L_0^2 & \frac{\alpha^2}{\delta^4} E_\mu^2 + 16(d+4)^2 L_0^2 + \frac{8d\sigma^2}{\delta^2}.
	\end{align}
	Applying the tower rule of conditional expectation and telescoping the above inequality from $k=1$ to $K-1$, adding $\Exp[ \|g_\delta(\theta_{0})\|^2 ]$ on both sides, adding $8dL_0^2 \frac{\alpha^2}{\delta^2} \Exp[ \|g_\delta(\theta_{K-1})\|^2 ]$ on the RHS, and rearranging the terms, we have that
	\begin{align}
	\label{eqn:Thm3.2_9}
	\sum_{k=0}^{K-1} \Exp[ \|g_\delta(\theta_{k})& \|^2 ] \leq \frac{1}{1 - \alpha_g} \Exp[ \|g_\delta(\theta_{0})\|^2 ] + \frac{32d^2d_i L_0^2}{1 - \alpha_g} \frac{\alpha^2}{\delta^4} E_\mu^2K  \nonumber \\
	& + \frac{16(d+4)^2 L_0^2}{1 - \alpha_g} K + \frac{8d\sigma^2}{(1-\alpha_g)\delta^2} K,
	\end{align}	
	where $\alpha_g = 8dL_0^2 \frac{\alpha^2}{\delta^2}$. When $\delta = \frac{\epsilon_J}{\sqrt{d}L_0}$ and $\alpha = \frac{\epsilon_J^{1.5}} {4d^{1.5}L_0^2\sqrt{K}}$, we have that $\alpha_g = \frac{\epsilon_J}{2 d K} \leq \frac{1}{2}$ when $\epsilon_J \leq d$ and $K \geq 1$. Substituting the bound on $\alpha_g$ into \eqref{eqn:Thm3.2_9}, we obtain that
	\begin{align}
		\label{eqn:Thm3.2_10}
		\sum_{k=0}^{K-1} \Exp[ \|g_\delta(\theta_{k})\|^2 ] \leq & \; 2 \Exp[ \|g_\delta(\theta_{0})\|^2 ] + 64d^2d_i L_0^2 \frac{\alpha^2}{\delta^4} E_\mu^2K \nonumber \\
		& + 32(d+4)^2 L_0^2 K + \frac{16d\sigma^2}{\delta^2} K.
	\end{align}
	Moreover, substituting the bound in \eqref{eqn:Thm3.2_10} into the bound in \eqref{eqn:Thm3.2_8}, we get that
	\begin{align}
	\label{eqn:Thm3.2_11}
	& \sum_{k=0}^{K-1}\Exp[ \| \nabla J_\delta (\theta_k) \|^2] \leq \frac{2}{\alpha} (\Exp[ J_\delta(\theta_{K}) - J_\delta(\theta_{0})]) \nonumber \\
	& + 4\sqrt{d}L_0 \frac{\alpha}{\delta} \Exp[\|g_\delta(\theta_0)\|^2] + 64(d+4)^{2.5}L_0^3\frac{\alpha}{\delta} K  & \nonumber \\
	& +  32 d^{1.5}L_0 \sigma^2 \frac{\alpha}{\delta^3} K + 128d^{2.5}d_i L_0^3 \frac{\alpha^3}{\delta^5} E_\mu^2 K \nonumber \\
	& + 8d^{1.5}d_iL_0 \frac{\alpha}{\delta^3} E_\mu^2 K + \frac{d d_i}{\delta^2} E_\mu^2 K.
	\end{align}
	Recalling that $E_\mu^2 = \frac{\epsilon \delta^2}{2 dd_i}$ and substituting the selected values for $\delta = \frac{\epsilon_J}{\sqrt{d}L_0}$ and $\alpha = \frac{\epsilon_J^{1.5}} {4d^{1.5}L_0^2\sqrt{K}}$ into \eqref{eqn:Thm3.2_11}, we obtain that
	\begin{align}
		\label{eqn:Thm3.2_12}
		& \sum_{k=0}^{K-1}\Exp[ \| \nabla J_\delta (\theta_k) \|^2] \leq \frac{8d^{1.5}L_0^2}{\epsilon_J^{1.5}} \Exp[ J_\delta^\ast - J_\delta(\theta_{0})] \sqrt{K} \nonumber \\
		& + \frac{\epsilon_J^{0.5}}{\sqrt{dK}} \Exp[\|g_\delta(\theta_0)\|^2] + \frac{\epsilon \epsilon_J^{1.5}}{d^{1.5} \sqrt{K}} + 16 \frac{(d+4)^2}{d} L_0^2 \epsilon_J^{0.5} \sqrt{K} & \nonumber \\
		& + \frac{8d^{1.5}L_0^2 \sigma^2}{\epsilon_J^{1.5}} \sqrt{K} + \frac{\epsilon \epsilon_J^{0.5}}{\sqrt{d}} \sqrt{K} + \frac{\epsilon}{2} K,
	\end{align}	
	where $J_\delta^\ast \geq J_\delta(\theta)$ for all $\theta \in \mathbb{R}^d$. The upper bound on $J_\delta(\theta)$ exists due to Assumption~\ref{asm:BoundVal}. Dividing both sides of \eqref{eqn:Thm3.2_12} by $K$, we achieve the bound in Theorem~\ref{thm:DZO}. 
\end{proof}

\section{Proof of Lemma~\ref{lem:AvgMu}}
{\bf Lemma 3.3.} Let Assumption~\ref{asm:Graph} hold. Then, running Algorithm~\ref{alg:DZO} with $\mathtt{DoTracking} = \mathtt{True}$, we have that $\bar{\mu}^k(m) = J(\theta_{k} + \delta u_k, \xi_k) = \frac{1}{N}\sum_{i = 1}^{N} J_i(\theta_{k} + \delta u_k, \xi_k)$, for all $m = 1, 2, \dots, N_c$ and all $k$.
\begin{proof}
	According to \eqref{eqn:Lem3.1_1}, we have that
	\begin{align}
	\label{eqn:Lem3.3_1}
		\bar{\mu}^k(m) = \frac{1}{N} \mathbf{1}^T & \vec{\mu}^k(m)= \frac{1}{N} \mathbf{1}^T  \vec{\mu}^k(m-1) = \dots  \nonumber \\
		& = \frac{1}{N} \mathbf{1}^T  \vec{\mu}^k(0), \text{ for all }\; m, k \geq 0.
	\end{align}
	Next, we show that $\bar{\mu}^{k}(N_c) = \bar{\mu}^{k}(0) = \frac{1}{N}\sum_{i = 1}^{N} J_i(\theta_{k} + \delta u_k, \xi_k)$ for all $k$. We use mathematical induction to construct the proof. Specifically, suppose that $\bar{\mu}^{k-1}(N_c) = \bar{\mu}^{k-1}(0)  = \frac{1}{N}\sum_{i = 1}^{N} J_i(\theta_{k-1} + \delta u_{k-1}, \xi_{k-1})$ holds. Then, according to line 9 in Algorithm~\ref{alg:DZO}, we have that
	\begin{align}
		&\frac{1}{N} \mathbf{1}^T \vec{\mu}^k(N_c)  =  \frac{1}{N} \mathbf{1}^T \vec{\mu}^k(0)\nonumber \\
		&  = \frac{1}{N} \mathbf{1}^T \big( \mu^{k-1}(N_c) + \vec{J}(\theta_k + \delta u_k, \xi_k) \nonumber \\
		& \quad \quad \quad \quad \quad \quad \quad \quad \quad \quad \quad  \quad \quad \quad  - \vec{J}(\theta_{k-1} + \delta u_{k-1}, \xi_{k-1}) \big) \nonumber \\
		&= \frac{1}{N}\sum_{i = 1}^{N} J_i(\theta_{k-1} + \delta u_{k-1}, \xi_{k-1}) + \frac{1}{N}\sum_{i = 1}^{N} J_i(\theta_{k} + \delta u_{k}, \xi_{k}) \nonumber \\
		& \quad\quad \quad \quad \quad \quad \quad \quad \quad \quad \quad - \frac{1}{N}\sum_{i = 1}^{N} J_i(\theta_{k-1} + \delta u_{k-1}, \xi_{k-1}) \nonumber \\
		& = \frac{1}{N}\sum_{i = 1}^{N} J_i(\theta_{k} + \delta u_{k}, \xi_{k}), \nonumber
	\end{align}
	where $\vec{J}(\theta_k + \delta u_k, \xi_k) = [\dots, J_i(\theta_{k} + \delta u_k, \xi_k), \dots]^T$. The second equality above is due to the induction hypothesis. 
	We have that the induction hypothesis is satisfied for $\bar{\mu}^{0}(N_c)$, according to line 7 in Algorithm~\ref{alg:DZO}. Therefore, we have shown that $\frac{1}{N} \mathbf{1}^T  \vec{\mu}^k(0) = \frac{1}{N}\sum_{i = 1}^{N} J_i(\theta_{k} + \delta u_k, \xi_k)$ for all $k$. And due to \eqref{eqn:Lem3.3_1}, we have shown that $\bar{\mu}^k(m) = J(\theta_{k} + \delta u_k, \xi_k) = \frac{1}{N}\sum_{i = 1}^{N} J_i(\theta_{k} + \delta u_k, \xi_k)$, for all $m = 1, 2, \dots, N_c$ and all $k$. The proof is complete.
\end{proof}

\section{Proof of Lemma~\ref{lem:ConsensusErr_2}}
{\bf Lemma 3.4.} Let Assumptions~\ref{asm:Lipschitz}, \ref{asm:UnbiasedEval}, \ref{asm:Graph} hold and define  $E_\mu^k = \|\vec{\mu}^k(N_c) - \bar{\mu}^k(N_c) \mathbf{1}\|$. Then, running Algorithm~\ref{alg:DZO} with $\mathtt{DoTracking} = \mathtt{True}$, we have that
\begin{align}
	\mathbb{E}\big[ (E_\mu^k)^2 \big] \leq & \bigg( 2\mathbb{E}\big[ (E_\mu^{k-1})^2 \big] + 32dL_0^2\frac{\alpha^2}{\delta^2}\mathbb{E}\big[ (E_\mu^{k-1})^2 \|u_{k-1}\|^2 \big] \nonumber \\
	& + 32d^2L_0^2\frac{\alpha^2}{\delta^2} \mathbb{E}\big[ (E_\mu^{k-2})^2 \big] \bigg) \rho_W^{2N_c} & \nonumber \\
	& + 16NL_0^2 \alpha^2 \mathbb{E}\big[ \| \tilde{\nabla} J(\theta_{k-1})\|^2 \big] \rho_W^{2N_c} \nonumber \\
	& + 32Nd L_0^2 \delta^2 \rho_W^{2N_c} + 16N\sigma^2 \rho_W^{2N_c} . &
\end{align}
\begin{proof}
	To simplify notations, in what follows, we denote $\vec{\mu}^k(N_c)$ and $\bar{\mu}^k(N_c)$ by $\vec{\mu}^k$ and $\bar{\mu}^k$ respectively. According to lines 9 and 12 in Algorithm~\ref{alg:DZO}, we have that 
	\begin{align}
		& \mathbb{E}\big[ (E_\mu^k)^2 \big] = \Exp [ \|(I - \frac{1}{N}\mathbf{1}\mathbf{1}^T) \vec{\mu}^k\|^2 ] \nonumber \\
		&= \Exp [ \|(I - \frac{1}{N}\mathbf{1}\mathbf{1}^T) W^{N_c} \big(\mu^{k-1} + \vec{J}(\theta_k + \delta u_k, \xi_k) \nonumber \\
		& \quad  \quad  - \vec{J}(\theta_{k-1} + \delta u_{k-1}, \xi_{k-1}) \big) \|^2 ] \nonumber \\
		& = \Exp [ \|( W^{N_c} - \frac{1}{N}\mathbf{1}\mathbf{1}^T) \big(\mu^{k-1} + \vec{J}(\theta_k + \delta u_k, \xi_k) \nonumber \\
		&  \quad  \quad - \vec{J}(\theta_{k-1} + \delta u_{k-1}, \xi_{k-1}) \big) \|^2 ]. \nonumber
	\end{align}
	Since $( W^{N_c} - \frac{1}{N}\mathbf{1}\mathbf{1}^T) \bar{\mu}^{k-1} \mathbf{1} = 0$, we obtain that
	\begin{align}
		\label{eqn:Lem3.4_2}
		& \mathbb{E}\big[ (E_\mu^k)^2 \big] = \Exp [ \|(W^{N_c} - \frac{1}{N}\mathbf{1}\mathbf{1}^T)\big(\mu^{k-1} - \bar{\mu}^{k-1} \mathbf{1}  \nonumber \\
		& + \vec{J}(\theta_k + \delta u_k, \xi_k) - \vec{J}(\theta_{k-1} + \delta u_{k-1}, \xi_{k-1}) \big) \|^2 ] \nonumber \\
		& \leq 2 \Exp[ \|W^{N_c} - \frac{1}{N}\mathbf{1}\mathbf{1}^T\|^2 (E_\mu^{k-1})^2 ] + 2 \Exp[ \|W^{N_c}  \\
		&  - \frac{1}{N}\mathbf{1}\mathbf{1}^T\|^2 \|\vec{J}(\theta_k + \delta u_k, \xi_k) - \vec{J}(\theta_{k-1} + \delta u_{k-1}, \xi_{k-1})\|^2 ]. \nonumber 
	\end{align}
	Moreover, since $(W -  \frac{1}{N}\mathbf{1}\mathbf{1}^T)^2 = W^2 -  \frac{1}{N}\mathbf{1}\mathbf{1}^T$, we get that $(W -  \frac{1}{N}\mathbf{1}\mathbf{1}^T)^3 = (W^2 -  \frac{1}{N}\mathbf{1}\mathbf{1}^T) (W - \frac{1}{N}\mathbf{1}\mathbf{1}^T = W^3 - \frac{1}{N}\mathbf{1}\mathbf{1}^T)$. This is because $W^2$ is also doubly stochastic, which can be shown using the definition. Through induction, we obtain that $(W - \frac{1}{N}\mathbf{1}\mathbf{1}^T)^{N_c} = W^{N_c} - \frac{1}{N}\mathbf{1}\mathbf{1}^T$. Therefore, we have that $\|W^{N_c} - \frac{1}{N}\mathbf{1}\mathbf{1}^T\|^2 = \|(W - \frac{1}{N}\mathbf{1}\mathbf{1}^T)^{N_c}\|^2 \leq \|W - \frac{1}{N}\mathbf{1}\mathbf{1}^T\|^{2N_c} = \rho_W^{2N_c}$. Applying this bound in \eqref{eqn:Lem3.4_2}, we get that
	\begin{align}
		\label{eqn:Lem3.4_3}
		& \mathbb{E}\big[ (E_\mu^k)^2 \big]  \leq 2 \rho_W^{2N_c} \Exp[(E_\mu^{k-1})^2] \\
		& + 2 \rho_W^{2N_c}  \Exp[ \|\vec{J}(\theta_k + \delta u_k, \xi_k) - \vec{J}(\theta_{k-1} + \delta u_{k-1}, \xi_{k-1})\|^2 ]. \nonumber 
	\end{align}
	Following the same procedure used to derive the bound in \eqref{eqn:LemB.3_3}, we get that $|J_i(\theta_k + \delta u_k, \xi_k) - J_i(\theta_{k-1} + \delta u_{k-1}, \xi_{k-1})|^2 \leq 4L_0^2 \|\theta_k - \theta_{k-1}\|^2 + 4 L_0^2 \delta^2 \|u_k - u_{k-1}\|^2 + 8\sigma^2$, for all $i = 1, 2, \dots, N$. Applying this bound to the RHS in \eqref{eqn:Lem3.4_3}, we obtain that 
	\begin{align}
	\label{eqn:Lem3.4_4}
	\mathbb{E}\big[ & (E_\mu^k)^2 \big]  \leq 2 \rho_W^{2N_c} \Exp[(E_\mu^{k-1})^2] + 8NL_0^2 \rho_W^{2N_c}  \Exp[ \|\theta_k - \theta_{k-1}\|^2  ]  \nonumber \\
	& + 8NL_0^2 \delta^2 \rho_W^{2N_c} \Exp[ \|u_k - u_{k-1}\|^2 ] + 16N \sigma^2 \rho_W^{2N_c} .
	\end{align}	
	Moreover, we have that $\Exp[ \|u_k - u_{k-1}\|^2 ] \leq \Exp[ 2\|u_k\|^2 + 2 \|u_{k-1}\|^2 ] \leq 4d$. Substituting the expression of $\theta_k - \theta_{k-1}$ for \eqref{eqn:Thm3.2_3} into \eqref{eqn:Lem3.4_4} and applying the bound in \eqref{eqn:Thm3.2_7}, we obtain that
	\begin{align}
		\label{eqn:Lem3.4_5}
		\mathbb{E}\big[ (E_\mu^k)^2 \big]  \leq &\; 2 \rho_W^{2N_c} \Exp[(E_\mu^{k-1})^2] + 16NL_0^2 \rho_W^{2N_c} \alpha^2 \Exp[ \|g_\delta(\theta_{k-1}) \|^2] \nonumber \\
		& + 32dNL_0^2 \delta^2 \rho_W^{2N_c} + 16N \sigma^2 \rho_W^{2N_c} \nonumber \\ 
		& + 32dL_0^2 \rho_W^{2N_c} \frac{\alpha^2}{\delta^2} \Exp[ (E_\mu^{k-1})^2 \|u_{k-1}\|^2 ]  \nonumber \\
		& + 32d^2L_0^2 \rho_W^{2N_c} \frac{\alpha^2}{\delta^2} \Exp[ (E_\mu^{k-2})^2 ] \nonumber.
	\end{align}	
	The proof is complete.
\end{proof}

\section{Proof of Theorem~\ref{thm:DZO_ValueTrack}}
First, we present a lemma characterizing the bound on $\Exp[ (E_\mu^k)^2 \|u_k\|^2]$.
\begin{lemma}
	\label{lem:ConsensusErr_3}
	Let Assumptions~\ref{asm:Lipschitz}, \ref{asm:UnbiasedEval}, \ref{asm:Graph} hold. Then, for all $k \geq 1$, we have that 
	\begin{equation}
		\label{eqn:LemE.1}
		\begin{split}
		& \mathbb{E}\big[ (E_\mu^k)^2 \|u_k\|^2 \big]  \nonumber \\
		& \leq 2 d \rho_W^{2N_c} \Exp[(E_\mu^{k-1})^2] + 16d NL_0^2 \rho_W^{2N_c} \alpha^2 \Exp[ \|g_\delta(\theta_{k-1}) \|^2]  \\ 
		&\quad  + 32d^2L_0^2 \rho_W^{2N_c} \frac{\alpha^2}{\delta^2} \Exp[ (E_\mu^{k-1})^2 \|u_{k-1}\|^2 ]  \nonumber \\
		& \quad + 32d^3L_0^2 \rho_W^{2N_c} \frac{\alpha^2}{\delta^2} \Exp[ (E_\mu^{k-2})^2 ] \\
		& \quad + 32(d+4)^2NL_0^2 \delta^2 \rho_W^{2N_c} + 16dN \sigma^2 \rho_W^{2N_c}.
		\end{split}
	\end{equation}	
	
\end{lemma}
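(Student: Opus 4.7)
The plan is to mirror the proof of Lemma~\ref{lem:ConsensusErr_2} line by line, simply carrying along the extra $\|u_k\|^2$ factor and absorbing it into expectations at the very end via the moment identities $\Exp[\|u_k\|^2] = d$ and $\Exp[\|u_k\|^4] \leq (d+4)^2$. Concretely, I would start from the value-tracking update at line~9 of Algorithm~\ref{alg:DZO} and Lemma~\ref{lem:AvgMu}, which together give
\[
\mu^k - \bar{\mu}^k \mathbf{1} = \Bigl(W^{N_c} - \tfrac{1}{N}\mathbf{1}\mathbf{1}^T\Bigr)\Bigl(\mu^{k-1} - \bar{\mu}^{k-1}\mathbf{1} + \vec{J}(\theta_k + \delta u_k, \xi_k) - \vec{J}(\theta_{k-1} + \delta u_{k-1}, \xi_{k-1})\Bigr),
\]
exactly as in the displayed step before equation~\eqref{eqn:Lem3.4_2}. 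Taking squared norms, multiplying by $\|u_k\|^2$, applying the operator-norm bound $\|W^{N_c} - \tfrac{1}{N}\mathbf{1}\mathbf{1}^T\|^2 \leq \rho_W^{2N_c}$, and splitting via $\|a+b\|^2 \leq 2\|a\|^2 + 2\|b\|^2$ produces two expectations to control.

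For the first, $2\rho_W^{2N_c}\Exp[(E_\mu^{k-1})^2 \|u_k\|^2]$, I would invoke independence: $u_k$ is sampled fresh at the start of episode $k$, hence independent of $\mu^{k-1}$, so the expectation factorizes to $2d\rho_W^{2N_c}\Exp[(E_\mu^{k-1})^2]$, matching the first term of the claimed bound. For the second expectation, I would reuse the Lipschitz-plus-variance estimate already derived inside the proof of Lemma~\ref{lem:ConsensusErr_2},
\[
\|\vec{J}(\theta_k + \delta u_k,\xi_k) - \vec{J}(\theta_{k-1} + \delta u_{k-1},\xi_{k-1})\|^2 \leq 4NL_0^2\|\theta_k - \theta_{k-1}\|^2 + 4NL_0^2 \delta^2 \|u_k - u_{k-1}\|^2 + 8N\sigma^2,
\]
and multiply through by $\|u_k\|^2$.

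Three elementary moment computations then finish the argument: (i) $\Exp[\|\theta_k - \theta_{k-1}\|^2 \|u_k\|^2] = d\,\Exp[\|\theta_k - \theta_{k-1}\|^2]$, since $\theta_k - \theta_{k-1}$ is measurable with respect to $\mathcal{F}_{k-1} = \sigma(u_t,\xi_t : t \leq k-1)$ and hence independent of $u_k$; (ii) $\Exp[\|u_k - u_{k-1}\|^2 \|u_k\|^2] \leq 2\Exp[\|u_k\|^4] + 2\Exp[\|u_{k-1}\|^2]\Exp[\|u_k\|^2] \leq 2(d+4)^2 + 2d^2 \leq 4(d+4)^2$; and (iii) $\Exp[\|u_k\|^2] = d$. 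Plugging (ii) and (iii) directly into the relevant summands yields the $32(d+4)^2 NL_0^2 \delta^2 \rho_W^{2N_c}$ and $16 dN\sigma^2 \rho_W^{2N_c}$ terms of the target bound.

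What remains is to expand $\Exp[\|\theta_k - \theta_{k-1}\|^2]$ in (i) using the residual-feedback update~\eqref{eqn:Thm3.2_3} and the inequality~\eqref{eqn:Thm3.2_7} specialized to iteration $k-1$ (with $d_i = d/N$), giving
\[
\Exp[\|\theta_k - \theta_{k-1}\|^2] \leq 2\alpha^2\Exp[\|g_\delta(\theta_{k-1})\|^2] + \tfrac{4d_i \alpha^2}{\delta^2}\Exp[(E_\mu^{k-1})^2 \|u_{k-1}\|^2] + \tfrac{4dd_i \alpha^2}{\delta^2}\Exp[(E_\mu^{k-2})^2].
\]
Multiplying this by $8dNL_0^2\rho_W^{2N_c}$ (the prefactor left over after (i) and the $4NL_0^2$ Lipschitz constant) produces exactly the three $\alpha^2$-terms claimed, with coefficients $16dNL_0^2$, $32d^2 L_0^2/\delta^2$, and $32d^3 L_0^2/\delta^2$. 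The whole argument is conceptually a carbon copy of the proof of Lemma~\ref{lem:ConsensusErr_2}; the only place that requires a little care is the independence bookkeeping in (i)--(iii), which I do not expect to be a real obstacle since $u_k$ is by construction drawn independently of $\mathcal{F}_{k-1}$, and there is no need to re-derive the moment facts on Gaussian norms which are quoted from \cite{nesterov2017random}.
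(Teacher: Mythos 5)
Your proposal is correct and follows essentially the same route as the paper's proof: the same value-tracking recursion and operator-norm bound as in Lemma~\ref{lem:ConsensusErr_2}, the same Lipschitz-plus-variance decomposition, the same independence of $\mathcal{F}_{k-1}$-measurable quantities from $u_k$, the Gaussian moment facts $\Exp[\|u_k\|^2]=d$, $\Exp[\|u_k\|^4]\leq(d+4)^2$, and the substitution of \eqref{eqn:Thm3.2_3} together with \eqref{eqn:Thm3.2_7}, and your bookkeeping (with $d_i=d/N$) reproduces exactly the stated coefficients.
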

\begin{proof}
	According to the bound on $(E_\mu^k)^2$ derived in \eqref{eqn:Lem3.4_2}, we have that
	\begin{align}
	\label{eqn:LemE.1_1}
		& \mathbb{E}\big[ (E_\mu^k)^2 \|u_k\|^2 \big]  \leq 2 \rho_W^{2N_c} \Exp[(E_\mu^{k-1})^2 \|u_k\|^2]  \\
		&  + 2 \rho_W^{2N_c}  \Exp[ \|\vec{J}(\theta_k + \delta u_k, \xi_k) - \vec{J}(\theta_{k-1} + \delta u_{k-1}, \xi_{k-1})\|^2 \|u_k\|^2 ]. \nonumber
	\end{align}
	Since $E_\mu^{k-1}$ is independent of $u_k$, we have that $\mathbb{E}\big[ (E_\mu^{k-1})^2 \|u_k\|^2 \big] = \Exp[ (E_\mu^{k-1})^2 ] \Exp[\|u_k\|^2] = d\Exp[ (E_\mu^{k-1})^2 ] $. Following the same procedure used to derive the bound in \eqref{eqn:LemB.3_3}, we get that $|J_i(\theta_k + \delta u_k, \xi_k) - J_i(\theta_{k-1} + \delta u_{k-1}, \xi_{k-1})|^2 \leq 4L_0^2 \|\theta_k - \theta_{k-1}\|^2 + 4 L_0^2 \delta^2 \|u_k - u_{k-1}\|^2 + 8\sigma^2$, for all $i = 1, 2, \dots, N$. Applying this bound to the RHS in \eqref{eqn:LemE.1_1}, we obtain that 
		\begin{align}
	\label{eqn:LemE.1_2}
	\mathbb{E}\big[ (& E_\mu^k)^2 \|u_k\|^2 \big]  \leq 2d \rho_W^{2N_c} \Exp[(E_\mu^{k-1})^2] \\
	&  + 8NL_0^2 \rho_W^{2N_c}  \Exp[ \|\theta_k - \theta_{k-1}\|^2 \|u_k\|^2 ]  \nonumber \\
	& + 8NL_0^2 \delta^2 \rho_W^{2N_c} \Exp[ \|u_k - u_{k-1}\|^2 \|u_k\|^2 ] + 16 dN \sigma^2 \rho_W^{2N_c}. \nonumber
	\end{align}	
	Moreover, we have that $\Exp[ \|u_k - u_{k-1}\|^2 \|u_k\|^2 ] \leq \Exp[ 2\|u_k\|^4 + 2 \|u_{k-1}\|^2\|u_k\|^2 ] \leq 4(d+4)^2$. Substituting the expression of $\theta_k - \theta_{k-1}$ in \eqref{eqn:Thm3.2_3} into \eqref{eqn:LemE.1_2} and applying the bound \eqref{eqn:Thm3.2_7}, we obtain that
	\begin{align}
	\label{eqn:LemE.1_3}
	& \mathbb{E}\big[ (E_\mu^k)^2 \|u_k\|^2 \big]   \nonumber \\
	& \leq 2 d \rho_W^{2N_c} \Exp[(E_\mu^{k-1})^2] + 16d NL_0^2 \rho_W^{2N_c} \alpha^2 \Exp[ \|g_\delta(\theta_{k-1}) \|^2] \nonumber \\ 
	& + 32d^2L_0^2 \rho_W^{2N_c} \frac{\alpha^2}{\delta^2} \Exp[ (E_\mu^{k-1})^2 \|u_{k-1}\|^2 ]  \nonumber \\
	& + 32d^3L_0^2 \rho_W^{2N_c} \frac{\alpha^2}{\delta^2} \Exp[ (E_\mu^{k-2})^2 ] \nonumber \\
	& + 32(d+4)^2NL_0^2 \delta^2 \rho_W^{2N_c} + 16dN \sigma^2 \rho_W^{2N_c} 
	\end{align}
	The proof is complete.
\end{proof}

Now, we are ready to present the proof for Theorem~\ref{thm:DZO_ValueTrack}.

{\bf Theorem 3.5. (Learning Rate of Algorithm~\ref{alg:DZO} with Value Tracking)}
Let Assumptions~\ref{asm:Lipschitz},\ref{asm:UnbiasedEval}, \ref{asm:Graph} hold and define	$\delta = \frac{\epsilon_J}{\sqrt{d}L_0}$, $ \alpha = \frac{\epsilon_J^{1.5}} {4d^{1.5}L_0^2\sqrt{K}}$, and $N_c \geq \max\big(\log(\frac{1}{2\sqrt{2}})/\log(\rho_W)$, $N_c \geq \log(\sqrt{ \frac{\epsilon}{ 4\big( G^2 \epsilon_J + 32 (d+4)^2d L_0^2  + 16 d^3L_0^2\sigma^2/\epsilon_J^2 \big)}}) / \log(\rho_W) \big)$
where $G^2 = \max \bigg( \mathbb{E}\big[ \| \tilde{\nabla} J(\theta_0) \|^2\big], \frac{2 \epsilon_J \epsilon }{d K} + 32L_0^2(d+4)^2$ $+ 16d^2L_0^2 \frac{\sigma^2}{\epsilon_J^2} \bigg)$. Then, running Algorithm~\ref{alg:DZO} with $\mathtt{DoTracking} = \mathtt{True}$, we have that $\frac{1}{K} \sum_{k = 0}^{K-1} \mathbb{E}[ \| \nabla J_\delta(\theta_k) \|^2 ] \leq \mathcal{O}( d^{1.5} \epsilon_J^{-1.5} K^{-0.5} ) + \frac{\epsilon}{2}$.
\begin{proof}
	First, we show that for all $k \geq 0$, we have that $\Exp[ \|g_\delta(\theta_k)\|^2 ] \leq G^2$, $\Exp[ (E_\mu^k)^2 ] \leq E_\mu^2$ and $\Exp[ (E_\mu^k)^2 \|u_k\|^2 ] \leq d E_\mu^2$ when we let  $\delta = \frac{\epsilon_J}{\sqrt{d}L_0}$, $ \alpha = \frac{\epsilon_J^{1.5}} {4d^{1.5}L_0^2\sqrt{K}}$, and $N_c \geq \max\big(\log(\frac{1}{2\sqrt{2}})/\log(\rho_W)$, $\log(\sqrt{ \frac{\epsilon}{ 2G^2 \epsilon_J + 64 (d+4)^2d L_0^2  + 32d^3L_0^2\sigma^2/\epsilon_J^2}}) / \log(\rho_W) \big)$, where $E_\mu^2 = \frac{\epsilon \delta^2}{2 dd_i}$. To prove this, we use mathematical induction. Specifically, suppose we have that $\Exp[ (E_\mu^{k-1})^2 ] \leq E_\mu^2$, $\Exp[ (E_\mu^{k-2})^2 ] \leq E_\mu^2$, $\Exp[ (E_\mu^{k-1})^2 \|u_{k-1}\|^2 ] \leq dE_\mu^2$ and $\Exp[ \|g_\delta(\theta_{k-1})\|^2 ] \leq G^2$. Then, according to Lemma~\ref{lem:SecondMoment}, we have that
	\begin{align}
		\label{eqn:Thm3.5_1}
		\Exp[ \|g_\delta(\theta_{k})\|^2] \leq &\; 8dL_0^2\frac{\alpha^2}{\delta^2} G^2 + 32d^2d_i L_0^2 \frac{\alpha^2}{\delta^4} E_\mu^2 \nonumber \\
		& + 16(d+4)^2 L_0^2 + \frac{8d\sigma^2}{\delta^2}.
	\end{align}
	Substituting the selected values for $\delta$, $\alpha$ and the constant $E_\mu^2$ in \eqref{eqn:Thm3.5_1}, we get that
	\begin{align}
		\label{eqn:Thm3.5_2}
		\Exp[ \|g_\delta(\theta_{k})\|^2] & \leq\frac{\epsilon_J}{2dK}G^2 + \frac{\epsilon \epsilon_J}{dK} + 16L_0^2(d+4)^2 + 8d^2L_0^2\frac{\sigma^2}{\epsilon_J^2} , \nonumber \\
		& \leq G^2
	\end{align}
	where the second inequality holds because $\frac{\epsilon_J}{2dK}G^2 \leq \frac{1}{2}G^2$ when $\frac{\epsilon_J}{dK} \leq 1$. In addition, we have that $\frac{\epsilon \epsilon_J}{dK} + 16L_0^2(d+4)^2 + 8d^2L_0^2\frac{\sigma^2}{\epsilon_J^2} \leq \frac{1}{2}G^2$ due to the choice of $G^2$ in Theorem~\ref{thm:DZO_ValueTrack}.
	Furthermore, according to Lemma~\ref{lem:ConsensusErr_2}, we have that
	\begin{align}
		\label{eqn:Thm3.5_3}
		\mathbb{E}\big[ (E_\mu^k)^2 \big] \leq &\; \big( 2 + 64d^2L_0^2\frac{\alpha^2}{\delta^2}\big) \rho_W^{2N_c} E_\mu^2 + 16NL_0^2 G^2 \alpha^2  \rho_W^{2N_c} \nonumber \\
		& + 32Nd L_0^2 \delta^2 \rho_W^{2N_c} + 16N\sigma^2 \rho_W^{2N_c} .
	\end{align}
	Substituting the selected values for $\delta$, $\alpha$ and $E_\mu^2$ into \eqref{eqn:Thm3.5_3}, we get that
	\begin{align}
		\label{eqn:Thm3.5_4}
		& \mathbb{E}\big[ (E_\mu^k)^2 \big] \nonumber \\
		& \leq (2 + 4 \frac{\epsilon_J}{K}) \rho_W^{2N_c} E_\mu^2 + (\frac{NG^2\epsilon_J^3}{d^3L_0^2K} + 32N\epsilon_J^2 + 16N\sigma^2) \rho_W^{2N_C} \nonumber \\
		&  \leq E_\mu^2.
	\end{align}
	The second inequality is because when $\epsilon_J/K \leq \frac{1}{2}$ and $N_c \geq \log(\frac{1}{2\sqrt{2}})/\log(\rho_W)$, we have that $(2 + 4 \frac{\epsilon_J}{K}) \rho_W^{2N_c} E_\mu^2 \leq \frac{1}{2} E_\mu^2$. In addition, when $N_c \geq \log(\sqrt{ \frac{\epsilon}{ 4\big( G^2 \epsilon_J + 32 (d+4)^2d L_0^2  + 16 d^3L_0^2\sigma^2/\epsilon_J^2 \big)}}) / \log(\rho_W)$, we get that $(\frac{NG^2\epsilon_J^3}{d^3L_0^2K} + 32N\epsilon_J^2 + 16N\sigma^2) \rho_W^{2N_C} \leq \frac{1}{2} E_\mu^2$. 
	
	Next, according to Lemma~\ref{lem:ConsensusErr_3}, we have that
	\begin{align}
		\label{eqn:Thm3.5_5}
		& \Exp \big[ (E_\mu^k)^2 \|u_k\|^2 \big]  \leq \big( 2 + 64d^2L_0^2\frac{\alpha^2}{\delta^2}\big) \rho_W^{2N_c} dE_\mu^2  \\
		&+ 16dNL_0^2 G^2 \alpha^2  \rho_W^{2N_c} + 32N(d+4)^2L_0^2 \delta^2 \rho_W^{2N_c} + 16dN\sigma^2 \rho_W^{2N_c}. \nonumber 
	\end{align}
	Substituting the selected values for $\delta$, $\alpha$ and $E_\mu^2$ into \eqref{eqn:Thm3.5_5}, we get that $\mathbb{E}\big[ (E_\mu^k)^2 \|u_k\|^2 \big] \leq  (2 + 4 \frac{\epsilon_J}{K}) \rho_W^{2N_c} dE_\mu^2 
	+ (\frac{NG^2\epsilon_J^3}{d^2L_0^2K}  + 32N \frac{(d+4)^2}{d}\epsilon_J^2 + 16dN\sigma^2) \rho_W^{2N_C}  \leq d E_\mu^2$.
The second inequality holds for similar reasons as those used to obtain \eqref{eqn:Thm3.5_4}. To complete the induction argument, we simply need to verify the induction hypothesis when $k = 1$. It is straightforward to see that $\Exp[ \|g_\delta(\theta_{0})\|^2 ] \leq G^2$ due to the definition of $G^2$. In addition, due to the initialization step $\mu^{-1}(N_c) = 0$ in line 1 in Algorithm~\ref{alg:DZO}, we have that $\Exp[ (E_\mu^{-1})^2 ] \leq E_\mu^2$. To satisfy the conditions $\Exp[ (E_\mu^{0})^2 ] \leq E_\mu^2$ and $\Exp[ (E_\mu^{0})^2 \|u_{0}\|^2 ] \leq dE_\mu^2$, it is sufficient to run many enough consensus steps only at the first iteration of Algorithm~\ref{alg:DZO}, according to Lemma~\ref{lem:ConsensusErr_1}. To summarize, the induction hypothesis is satisfied at the first iteration of Algorithm~\ref{alg:DZO} and we have shown that for all $k \geq 0$, we have that $\Exp[ \|g_\delta(\theta_k)\|^2 ] \leq G^2$, $\Exp[ (E_\mu^k)^2 ] \leq E_\mu^2$ and $\Exp[ (E_\mu^k)^2 \|u_k\|^2 ] \leq d E_\mu^2$ under the choice of parameters specified in Theorem~\ref{thm:DZO_ValueTrack}.

Finally, using the uniform bounds $\Exp[ (E_\mu^k)^2 ] \leq E_\mu^2$ and $\Exp[ (E_\mu^k)^2 \|u_k\|^2 ] \leq d E_\mu^2$, we can follow the same procedure as in the proof of Theorem~\ref{thm:DZO} and obtain the following optimality bound
	\begin{align}
& \sum_{k=0}^{K-1}\Exp[ \| \nabla J_\delta (\theta_k) \|^2] \leq \frac{8d^{1.5}L_0^2}{\epsilon_J^{1.5}} \Exp[ J_\delta^\ast - J_\delta(\theta_{0})] \sqrt{K}  \nonumber \\
& + \frac{\epsilon_J^{0.5}}{\sqrt{dK}} \Exp[\|g_\delta(\theta_0)\|^2] + \frac{\epsilon \epsilon_J^{1.5}}{d^{1.5} \sqrt{K}} + 16 \frac{(d+4)^2}{d} L_0^2 \epsilon_J^{0.5} \sqrt{K}& \nonumber \\
& + \frac{8d^{1.5}L_0^2 \sigma^2}{\epsilon_J^{1.5}} \sqrt{K} + \frac{\epsilon \epsilon_J^{0.5}}{\sqrt{d}} \sqrt{K} + \frac{\epsilon}{2} K.
\end{align}	
Dividing both sides by $K$ completes the proof.
\end{proof}

\begin{IEEEbiography}[{\includegraphics[width=1in,height=1.25in,clip,keepaspectratio]{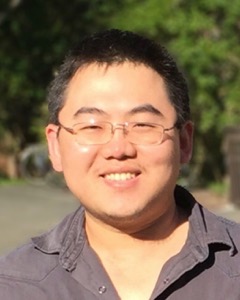}}]{Yan Zhang (S’16)}
	received his bachelor’s degree in mechanical engineering from Tsinghua University, Beijing, China in 2014, and his master's degree in mechanical engineering from Duke University, Durham, NC in 2016. He received his doctoral degree in mechanical engineering at Duke University, Durham, NC in 2021.
	
	His research interests include distributed optimization and distributed reinforcement learning algorithms.
\end{IEEEbiography}

\begin{IEEEbiography}
[{\includegraphics[width=1in,height=1.25in,clip,keepaspectratio]{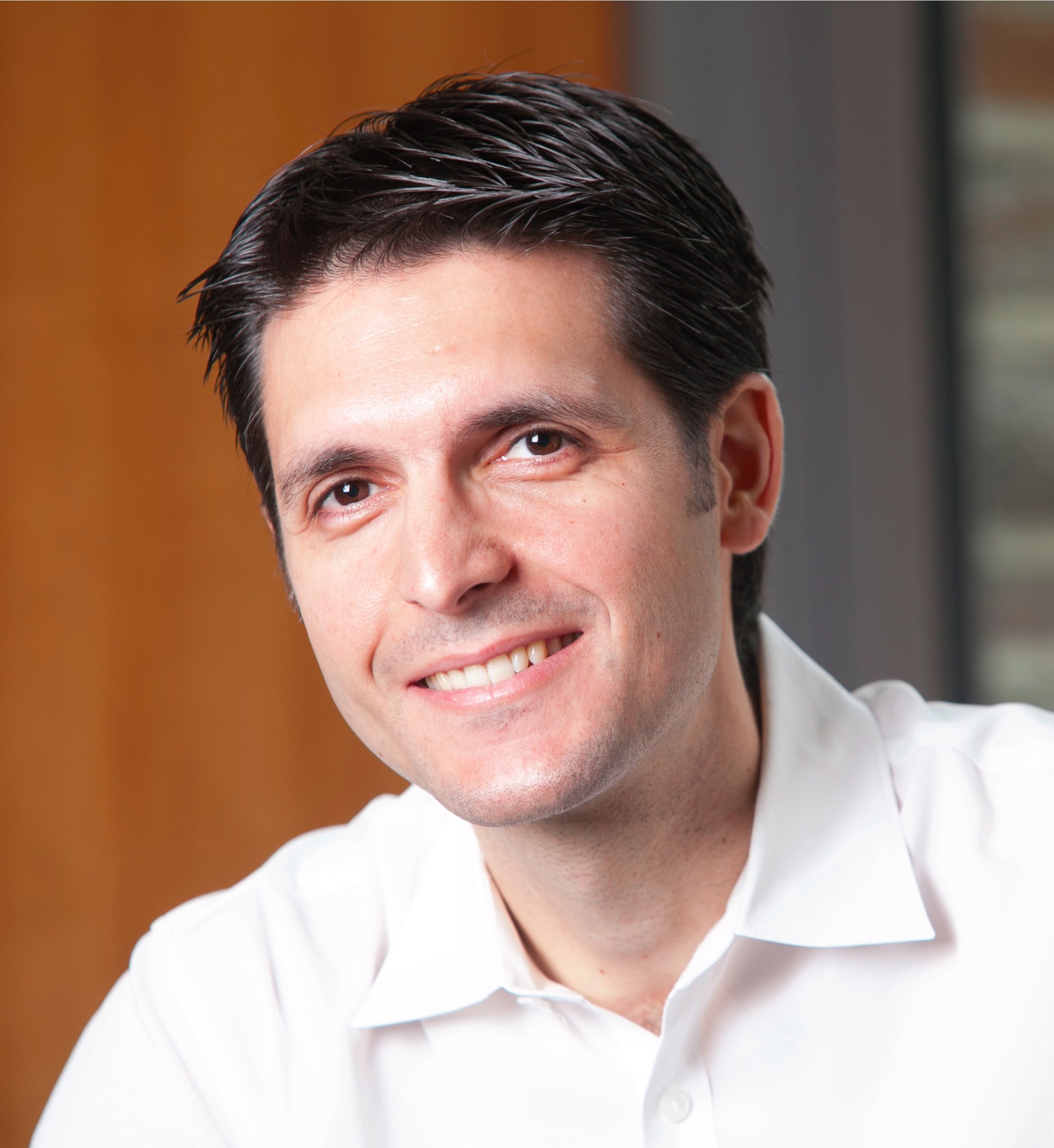}}]{Michael M. Zavlanos}  (S'05M'09SM'19) received the Diploma in mechanical engineering from the National Technical University of Athens, Greece, in 2002, and the M.S.E. and Ph.D. degrees in electrical and systems engineering from the University of Pennsylvania, Philadelphia, PA, in 2005 and 2008, respectively. 

He is currently the Yoh Family Professor of the Department of Mechanical Engineering and Materials Science at Duke University, Durham, NC. He also holds a secondary appointment in the Department of Electrical and Computer Engineering and the Department of Computer Science. His research focuses on control theory, optimization, and learning with applications in robotics and autonomous systems, cyber-physical systems, and healthcare/medicine. Dr. Zavlanos is a recipient of various awards including the 2014 ONR YIP Award and the 2011 NSF CAREER Award.
\end{IEEEbiography}

\end{document}